\newcommand{\bO}{{\mathcal{O}}}
\newcommand{\Pperp}{{{P_{u_1^{\perp}}}}}
\newcommand{\huip}{{\hat{u}_i^\perp}}
\newtheorem{theorem}{Theorem}[section]
\newtheorem{corollary}[theorem]{Corollary}
\newtheorem{lemma}[theorem]{Lemma}
\newtheorem{remark}[theorem]{Remark}
\newtheorem{assumption}{Assumption}[section]
\title{When Does Non-Orthogonal Tensor Decomposition Have No Spurious Local Minima?}
\author{
  Maziar Sanjabi
  \thanks{Data Science \& Operations Department, University of Southern, California, Los Angeles, CA 90089, USA}
  \\\texttt{maziar.sanjabi@gmail.com} \\
   \And
  Sina Baharlouei \thanks{Department of Industrial and Systems Engineering, University of Southern, California, Los Angeles, CA 90089, USA}\\
  \texttt{baharlou@usc.edu} \\
   \And
  Meisam Razaviyayn
  \footnotemark[2]\\
  \texttt{razaviya@usc.edu} \\
  \And
  Jason D. Lee
  \thanks{Department of Electrical Engineering, Princeton University, Princeton, NJ 08540, USA}\\
  \texttt{jasonlee@princeton.edu} \\
 }
\begin{document}
\maketitle

\begin{abstract}
    We study the optimization problem for decomposing $d$ dimensional fourth-order Tensors with $k$ non-orthogonal components. We derive \textit{deterministic} conditions under which such a problem does not have spurious local minima. In particular, we show that if $\kappa = \frac{\lambda_{max}}{\lambda_{min}} < \frac{5}{4}$, and incoherence coefficient is of the order $\bO(\frac{1}{\sqrt{d}})$, then all the local minima are globally optimal. Using standard techniques, these conditions could be easily transformed into conditions that would hold with high probability in high dimensions when the components are generated randomly. Finally, we prove that the tensor power method with deflation and restarts could efficiently extract all the components within a tolerance level $\bO(\kappa \sqrt{k\tau^3})$ that seems to be the noise floor of non-orthogonal tensor decomposition. 
\end{abstract}

\section{Introduction}
Tensor Decomposition approaches are demonstrated to be effective tools for modeling and solving a wide range of problems in the context of signal processing, statistical inference, and machine learning. In particular, many unsupervised learning problems such as Gaussian Mixture Models \citep{Gauss_Mix_ge_Kakade_2015}, Latent Dirichlet Allocation \citep{tensor_decomp_latent_anandkumar2014}, Topic Modeling \citep{cheng2015model, anandkumar2012two}, Hidden Markov Models \citep{RL_azizzadenesheli_2016}, Latent Graphical Models \citep{song2013hierarchical, graph_chaganty_Liang_2014}, and Community Detection \citep{al2017tensor, anandkumar2014tensor} can be modeled as a canonical decomposition (CANDECOMP) problem which is also known as Parallel Factorization (PARAFAC).

It has been shown that under mild assumptions such as $2R \leq k_A + k_B + k_C$ where $R$ is the number of components, and $k_A, k_B,$ and $k_C$ are the k-rank of the component matrices $A, B,$ and $C$ respectively, the CANDECOMP/PARAFAC (CP) decomposition exists and it is unique \citep{harshman1970foundations, kruskal1977three}. Finding such a decomposition is an NP-hard problem in general \citep{haastad1990tensor, hillar2013most}. Despite the hardness results, many of the proposed algorithms in the literature work well for practical problems \citep{leurgans1993decomposition, tensor_decomp_latent_anandkumar2014, kolda2009tensor}. In fact, for a wide range of these algorithms there are theoretical local and global guarantees under some realistic assumptions \citep{uschmajew2012local, tensor_decomp_latent_anandkumar2014}. One of the important cases where the problem of finding the decomposition has been very well-studied is the case where the components are orthogonal. \citep{tensor_decomp_latent_anandkumar2014} demonstrates that many problems in practice can be reduced to an orthogonal tensor decomposition with a pre-processing phase known as data whitening. While data whitening helps us transform the problem to the orthogonal case, it is computationally expensive especially in high-dimensional settings. Besides, it can affect the performance of the model for problems such as Independent Component Analysis \citep{le2011ica}.

Practical drawbacks of data whitening, alongside with theoretical concerns such as instability in high-dimensional cases \citep{anandkumar2014guaranteed}, have motivated researchers to investigate the CP tensor decomposition problem in the non-orthogonal scenario. \citep{anandkumar2014guaranteed} provide local and global guarantees for recovering the components of CP under mild non-orthogonality assumptions. However, their result requires on the proper initialization of the algorithm close to the global optimum. \citep{ge2017optimization} analyze the non-convex landscape of the non-orthogonal tensor decomposition problem, and characterize the local minima of the problem under the over-complete regime (rank of the tensor is much higher than the dimension of the components).  \citep{non-orthogona_ALS_sharan_valiant_17} show that the orthogonalized alternating least square approach can \textit{globally} recover the components of the tensor decomposition problem when $k$, the rank of the tensor, is $\mathcal{O} (d ^ {0.25})$ where d is the dimension. \textit{In this paper, we aim to show the global convergence of a variation of Tensor Power Method (TPM) augmented by the deflation and restart of the algorithm for fourth-order tensors.} \textit{This algorithm can recover all components of a given non-orthogonal decomposition problem when} 
$k = \bO(d ^ {0.5})$.

Before proceeding to the main results, let us define some notations. Let $u(i)$ be the $i$-th coordinate of vector $u$, the Kronecker product of $n$ vectors $u_1, u_2, ..., u_n$ denoted by $T = u_1 \otimes u_2 \otimes ... \otimes u_n$, is defined as an $n$th-order tensor $T$, such that $T(i_1, i_2, ..., i_n) = u_1(i_1) u_2(i_2) ... u_n(i_n)$. Moreover,
\begin{gather*}
    u^{\otimes n} \triangleq \underbrace{u \otimes u \otimes ... \otimes u}_{n \: \textrm{times}}
\end{gather*}

A fourth-order tensor $T$, can be seen as a multi-linear transformation, defined for given  d-dimensional vectors $x, y, z,$ and $t$ as
\begin{align}
    T(x, y, z, t) = \sum_{i=1}^{d} \sum_{j=1}^{d} \sum_{k=1}^{d} \sum_{\ell=1}^{d} T(i, j, k, \ell) x(i) y(j) z(k) t(\ell). \label{lambda_map}
\end{align}

To understand the ideas of the proposed algorithm, let us first start by considering the tensor decomposition of a fourth-order tensor in the orthogonal scenario. Suppose that the tensor of interest, $T$, is a fourth-order tensor which can be decomposed as
\begin{align}
    T = \sum_{i=1}^k u_i^{\otimes 4},
\end{align}
where $u_i$'s are $d$-dimensional orthogonal vectors, i.e., $\langle u_i, u_j\rangle=0, ~~i\neq j$. Here, for simplicity of presentation, we assumed that different components have the same weight. For this case, the aim of tensor decomposition is to find the orthogonal decomposition vectors $\{u_i\}_{i=1}^k$ efficiently given the tensor $T$. \citep{tensor_decomp_latent_anandkumar2014} proves that in this case the non-convex optimization problem
\begin{equation}\label{eq: opt1}
\begin{split}
    \min_w \quad &~f(w) = -\frac{1}{4}\sum_{i=1}^k (u_i^T w)^4 = -\frac{1}{4}T(w,w,w,w),\\
    \text{s.t.}\quad &~\|w\|^2 = 1. 
\end{split}
\end{equation}
does not have any spurious local minima. Consequently, all the local minima of this problem correspond to $\pm u_i$, $i=1,\cdots, k$. This property implies that most of the simple first order methods, such as randomly initialized manifold gradient descent, which are proven to converge to local minima \citep{GD_local_min_lee2016} would be able to find the components, almost surely. However, in practice, algorithms such as gradient descent are shown to be slow due to their conservative and static step-size choices. On the other hand, algorithms such as tensor power method (TPM) \citep{tensor_decomp_latent_anandkumar2014} are shown to be practically faster. Moreover, \citep{tensor_decomp_latent_anandkumar2014} shows that TPM with multiple random restarts and deflation is capable of finding  all  components $u_i$ with high probability. Unfortunately, the results in \citep{tensor_decomp_latent_anandkumar2014} assume the orthogonality of the components.


In this work, we extend the results of \citep{tensor_decomp_latent_anandkumar2014} to the non-orthogonal case. To establish our result, we first analyze the optimization landscape of problem~\eqref{eq: opt1} In Section~\ref{geometric_analysis} under incoherence condition, restricted isometry property (RIP), and a certain upper-bound on the ratio of the weight of different components. We show that any local minimizer of problem~\eqref{eq: opt1} is close to one of the actual components. In other words, for any local minimizer $u^*$ of~\eqref{eq: opt1}, there exists an index $i$, such that $\|u_i - u^*\|$ is small (up to sign ambiguity in $u^*$). 
In Section~\ref{recovery}, we show that Tensor Power Method (TPM) with deflation and restarting can recover all components of a given tensor with high probability. 


\section{Optimality Conditions for the Tensor Decomposition Problem}

In order to solve problem~\eqref{eq: opt1}, we can use the manifold gradient descent method. It is well-known that manifold gradient descent with random initialization converges to local minima \citep{GD_local_min_lee2016}, almost surely. Thus, if we prove any local minima of the above problem is close to one of the components $u_i$, we can use manifold gradient descent for recovering the components. This gradient descent method for solving tensor decomposition has been used before in the case of orthogonal orthogonal tensors; see, e.g. \citep{tensor_decomp_latent_anandkumar2014}. In fact, in the orthogonal case, the gradient descent method coincides with Tensor Power Method \citep{tensor_decomp_latent_anandkumar2014}.

To study the landscape of~\eqref{eq: opt1}, let us first present the first- and second-order optimality conditions. Let the projection matrix to the manifold $\mathcal{M} = \bigg\{w\bigg|\|w\|=1\bigg\}$ at point $w$  be $P_w = I - ww^T$. Then, for the optimization problem~\eqref{eq: opt1}, any local minimizer point $w$  has to satisfy the following two optimality conditions:
\begin{itemize}
    \item First-order optimality condition\footnote{The subscript $\mathcal{M}$ refers to the fact that the corresponding derivative is calculated while projecting the directions on the manifold $\mathcal{M}$.}:
    \begin{align}
        \nabla_\mathcal{M} f(w) = - P_w\sum_{i=1}^k (w^T u_i)^3 u_i = 0
    \end{align}
    \item Second-order optimality condition:
    \begin{align}
        \nabla^2_\mathcal{M} f(w) = \underbrace{\bigg(\sum_{i=1}^k (u_i^T w)^4 \bigg)}_{-4~f(w)}P_w - 3\sum_{i=1}^k (u_i^Tw)^2 P_w u_i u_i^T P_w \succeq \mathbf{0}
    \end{align}
\end{itemize}
    
The first-order optimality condition implies that for any local optimal point~$w$, we have 
    \begin{align}
        \underbrace{\bigg(\sum_i (w^T u_i)^4\bigg)}_{\lambda} w = \sum_i (w^T u_i)^3 u_i. \label{eq: first_order_result}
    \end{align}
Consequently, $w$ has to be in the span of $u_i$'s if $\lambda \neq 0$.
Based on these optimality conditions, we study the landscape of the tensor decomposition problem~\eqref{eq: opt1} in two steps: In the first step, we show that if a local minimum exists close to one of the components, the local minimum should be in fact very close to the true component. In other words, within a region around any true component, the local minima are all very close to the true component. Thus, the landscape is locally well-behaved around the true components. In the second step, we make our result global by showing that any local minimizer of the tensor decomposition problem~\eqref{eq: opt1} is relatively close to one of the true components (up to sign ambiguity).

\vspace{0.2cm}

To proceed, let us make the following standard assumptions:

\begin{assumption}\label{ass: incoherence}
    $u_i$'s are all norm $1$ and satisfy the following incoherence condition with constant $\tau$, i.e., 
    \begin{align}
        |u_i^T u_j|< \tau,
    \end{align}
    and $\tau = \bO(\frac{1}{\sqrt{d}})$. 
    Moreover, we assume that for any vector $w$ in the span of $\{u_i\}_{i=1}^k$, we have 
    \begin{align}
        (1-\delta)\|w\|^2 \leq \|U^Tw\|^2\leq (1+\delta) \|w\|^2,
    \end{align}
    where $U = \big[u_1,\cdots,u_k\big]\in \mathbb{R}^{d\times k}$ and $\delta = \bO\left(\sqrt{k/d }\right)$. This condition is known in the literature as Restricted Isometric Property (RIP) that is usually satisfied with high probability when $d$ is large for many forms of random matrices.
\end{assumption}
Furthermore, for general matrices, it is easy to prove that $\delta\leq (k-1)\tau$.

\section{Geometric Analysis}\label{geometric_analysis}
Throughout this section, for any given point~$w$, we define $c_i(w) = |w^Tu_i|$. For simplicity of notations and since it is clear from the context, we use $c_i$ instead of $c_i(w)$.  We also, without loss of generality, assume that $|c_1|\geq |c_2|\geq \cdots\geq |c_k|$. Using this definition, the following lemma shows that there is always a gap between $c_1$ and the rest of the components~$c_j$, $j\neq 1$.
\pagebreak
\begin{lemma}
    Let $w$ be a local minimizer of \eqref{eq: opt1}, then $|c_1|> \sqrt{2} |c_2|$. \label{lemma: cgaps}
\end{lemma}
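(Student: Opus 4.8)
The plan is to exploit the two optimality conditions at a local minimizer $w$, especially the second-order condition evaluated along a cleverly chosen tangent direction, and combine it with the first-order identity \eqref{eq: first_order_result} to force a quantitative separation between $c_1^2$ and $c_2^2$. The natural test direction is $v = P_w u_1$ (the projection of the leading component onto the tangent space of the sphere at $w$), since plugging this into $\nabla^2_{\mathcal M} f(w) \succeq 0$ produces an inequality whose dominant term on the "bad" side is $3 c_1^2 (u_1^T P_w u_1)^2 = 3 c_1^2 (1 - c_1^2)^2$, and whose "good" side is $\lambda \, \|P_w u_1\|^2 = \lambda(1-c_1^2)$ with $\lambda = \sum_i c_i^4$. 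Dividing through by $(1-c_1^2)$ (which is nonzero unless $w = \pm u_1$, a case we handle trivially), one gets roughly $\sum_i c_i^4 \geq 3 c_1^2 (1-c_1^2) - (\text{cross terms})$, and the cross terms $\sum_{i \neq 1} c_i^2 (u_i^T P_w u_1)^2$ together with $\sum_i c_i^4$ must be controlled.

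First I would record the clean consequences of Assumption~\ref{ass: incoherence}: from the first-order condition, $\sum_i c_i^2 \leq \|U^T w\|^2/(1-\delta)$-type bounds give $\sum_{i} c_i^2 = \Theta(1)$, and more importantly $\sum_{i\neq 1} c_i^4 \leq c_2^2 \sum_{i\neq 1} c_i^2 \leq c_2^2 \cdot O(1)$, while $u_i^T P_w u_1 = u_i^T u_1 - c_i c_1 \cdot(\pm 1)$ so $|u_i^T P_w u_1| \leq \tau + c_i c_1$. These let me bound every term involving indices $i \neq 1$ by something of order $\tau$ or of order $c_2^2$ (times absolute constants), using $\tau = O(1/\sqrt d)$ and $\delta = O(\sqrt{k/d})$ to absorb the genuinely small pieces. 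Then I would also use the first-order condition contracted against $u_1$ and against $w$ to get a second relation linking $c_1$ to $\lambda$: from \eqref{eq: first_order_result}, $\lambda c_1 = \sum_i c_i^3 (u_i^T u_1) = c_1^4 \pm (\text{small})$, giving $\lambda \approx c_1^3$ up to error terms, hence $\sum_{i\neq 1} c_i^4 \approx c_1 \lambda - c_1^4$ is also small relative to $c_1^4$.

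Next I would combine these. Substituting $\lambda \approx c_1^3$ (with explicit error) into the second-order inequality $\lambda \geq 3 c_1^2(1-c_1^2) - (\text{cross terms})$ yields, after dividing by $c_1^2$, an inequality of the shape $c_1^2 \gtrsim 3(1-c_1^2) - (\text{terms of size } O(\tau) + O(c_2^2))$, i.e. $4c_1^2 \gtrsim 3 - O(\tau) - O(c_2^2)$. Separately, $1 = \|w\|^2 \geq $ something forcing $c_1^2 + c_2^2 \leq 1 + O(\delta)$ is too weak; instead the relation I actually want is that $c_2$ cannot be large, which comes from testing the second-order condition along $v = P_w u_2$ as well, or from noting $\sum_{i} c_i^2 \le 1 + \delta$ combined with the near-equality $\lambda \approx c_1^3$. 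The cleanest route: show $c_1^2 > 3/4 - o(1)$ and $c_2^2 < 1/4 + o(1)$ hold simultaneously — but since $o(1)$ terms spoil a clean constant, I would instead keep everything as explicit inequalities in $\tau, \delta, k$ and verify that under the hypotheses $\tau = O(1/\sqrt d)$, $\delta = O(\sqrt{k/d})$ the error terms are smaller than the gap between $c_1^2$ and $2c_2^2$, concluding $c_1^2 > 2 c_2^2$.

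\textbf{The main obstacle} I expect is bookkeeping the cross terms $\sum_{i \neq 1} c_i^2 (u_i^T P_w u_1)^2$ and the $u_i^T P_w u_1 = u_i^T u_1 - c_1 c_i$ expansions tightly enough: a naive bound replaces $c_i$ by $c_2$ and loses a factor of $k$, which would require $\tau = O(1/k)$ rather than $O(1/\sqrt d)$. The fix is to split the sum by whether $c_i \gtrsim \tau$ or $c_i \lesssim \tau$ and use $\sum_i c_i^2 = O(1)$ (from RIP) rather than a union bound, so that $\sum_{i\neq 1} c_i^2 (c_1 c_i)^2 \le c_1^2 \sum_{i \neq 1} c_i^4 \le c_1^2 c_2^2 \cdot O(1)$ and $\sum_{i \neq 1} c_i^2 \tau^2 \le \tau^2 \cdot O(1)$ — both genuinely small. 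Getting the constants to land so that the final threshold is exactly $\sqrt 2$ (rather than some messier constant) is the delicate part, and I suspect the proof uses the second-order condition in a slightly sharper way — e.g. testing along $v = P_w(\alpha u_1 + \beta u_2)$ or directly estimating $\nabla^2_{\mathcal M} f(w)[P_w u_1, P_w u_1]$ and subtracting a multiple of the first-order identity — to get the clean $\sqrt 2$.
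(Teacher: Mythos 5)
Your high-level plan---test the second-order condition $\nabla^2_{\mathcal M} f(w) \succeq 0$ along a well-chosen tangent direction and close the loop with incoherence/RIP---is the right one, and your final sentence correctly guesses that a direction built from \emph{both} $u_1$ and $u_2$ is what makes the constant $\sqrt 2$ land cleanly. But your primary development, based on $v = P_w u_1$ together with the first-order identity, is not the paper's route and runs into exactly the bookkeeping you worry about. The paper's proof is a short contradiction argument. Assume $|c_1| \le \sqrt 2\,|c_2|$, and pick a unit vector $v \in \mathrm{span}\{u_1,u_2\}$ with $v \perp w$ (this intersection is at least one-dimensional). Then
\begin{align}
v^T\nabla^2_{\mathcal M} f(w)\,v = \sum_i (w^Tu_i)^4 - 3\sum_i (u_i^Tw)^2(u_i^Tv)^2 \le c_1^2(1+\delta) - \tfrac{3}{2}c_1^2(1-\delta) < 0,\nonumber
\end{align}
where the first sum is bounded by $c_1^2\sum_i(w^Tu_i)^2 \le c_1^2(1+\delta)$ via RIP, and in the second sum one drops all $i>2$ terms (they only help) and uses $\sum_{i=1}^2(u_i^Tw)^2(u_i^Tv)^2 \ge \min(c_1^2,c_2^2)\|[u_1,u_2]^Tv\|^2 \ge \tfrac12 c_1^2(1-\delta)$, where the $\min$ is controlled by the contradiction hypothesis and $\|[u_1,u_2]^Tv\|^2 \ge 1-\delta$ is RIP applied to the unit vector $v$ lying in the span. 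This contradicts local minimality.

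Compared to your plan, the crucial simplification is choosing $v$ inside $\mathrm{span}\{u_1,u_2\}$ and orthogonal to $w$: it lets RIP control $(u_1^Tv)^2+(u_2^Tv)^2$ in one stroke, symmetrically in $u_1,u_2$, whereas with $v = P_wu_1$ you have no a priori handle on $(u_2^Tv)^2$ and must expand $u_i^TP_wu_1 = u_i^Tu_1 - c_ic_1$ term by term. Also note this lemma uses only the second-order condition and RIP; the first-order identity \eqref{eq: first_order_result} is not needed here at all (it enters in Lemma~\ref{lemma: correlation}). Your intermediate relation ``$\lambda \approx c_1^3$'' has an arithmetic slip as well: contracting \eqref{eq: first_order_result} with $u_1$ gives $\lambda c_1 = c_1^3 + O(k\tau)$, so $\lambda \approx c_1^2$---but since the first-order condition is unnecessary for this lemma, correcting it does not rescue that branch of the argument so much as show it should be dropped.
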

\begin{proof}
    We prove by contradiction. Assume the contrary that $|c_1| \leq \sqrt{2} |c_2|$. Take a unit vector $v$ in the span of $u_1$ and $u_2$ which is orthogonal to $w$. First of all, we have that
    \begin{align}
        v^T\nabla^2_\mathcal{M} f(w) v &= \big(\sum_i (w^Tu_i)^4\big) \underbrace{\|v\|^2}_{=1}- 3 \sum_{i}(u_i^Tw)^2 (u_i^Tv)^2\nonumber\\
        &\leq c_1^2 \underbrace{\big(\sum_i (w^Tu_i)^2\big)}_{\leq (1+\delta)\|w\|^2 = (1+\delta)} - 3 \min_{i=1,2}((u_i^Tw)^2) \|[u_1, u_2]^Tv\|_2^2\nonumber
    \end{align}
    Based on the RIP assumption and since $\|v\|=1$, we have $\|[u_1, u_2]^Tv\|_2^2>(1-\delta)$. Furthermore, $\min_{i=1,2}((u_i^Tw)^2)\geq \frac{1}{2} c_1^2$ based on the contrary assumption we made. Thus, when $\delta\leq k\tau\leq 0.01$ we have
    \begin{align}
        v^T\nabla^2_\mathcal{M} f(w) v \leq c_1^2 (1+\delta) - \frac{3}{2} c_1^2 (1-\delta) \leq c_1^2(1+\delta -\frac{3}{2}(1-\delta))<0. \label{eq:temp1}
    \end{align}
    On the other hand, since $w$ is a local minimizer of~\eqref{eq: opt1}, the second-order optimality condition implies that~$v^T\nabla^2_\mathcal{M} f(w) v\geq 0$, which contradicts~\eqref{eq:temp1}.
\end{proof}

The following lemma shows that any local minimizer $w$ of problem~\eqref{eq: opt1}, is very close one of the to the component $u_1$, with the highest value of $u_i^T w$ for $i \in \{1, \dots, k\}$. In other words, the projection of $w$, onto space spanned by the rest of components is very small compared to its projection onto $u_1$.     
\begin{lemma}
    If $w$ is a local minimizer of~\eqref{eq: opt1} and $k\tau\leq 0.05$ then
    \begin{align}
        \frac{\|\Pperp(w)\|}{|u_1^T w|}\leq \bO(\sqrt{k}\tau^3)
    \end{align} \label{lemma: correlation}
\end{lemma}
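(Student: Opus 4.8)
The plan is to combine the first-order optimality condition \eqref{eq: first_order_result} projected onto $u_1^{\perp}$ with a crude a priori lower bound on $|u_1^Tw|$ squeezed out of the second-order condition, and then close with a short bootstrap. First I would reduce to the nondegenerate case: if $\lambda := \sum_i(w^Tu_i)^4 = 0$ then all $c_i=0$, so $w$ is orthogonal to $\mathrm{span}\{u_i\}$, and moving $w$ infinitesimally toward $u_1$ strictly decreases $f$; hence $w$ is not a local minimizer and we may assume $\lambda\neq 0$. Flipping the sign of $u_1$ if needed, write $w = c_1 u_1 + r$ with $c_1 = u_1^Tw\ge 0$ and $r:=\Pperp(w)$, so $u_1^Tr = 0$ and $c_1^2+\|r\|^2 = 1$ (if $\|r\|=0$ the claim is trivial). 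Put $\beta_i := u_1^Tu_i$ (so $|\beta_i|<\tau$ for $i\ge 2$) and $\rho_i := r^Tu_i$; since $r\in\mathrm{span}\{u_i\}$, RIP gives $\sum_i\rho_i^2 = \|U^Tr\|^2\le(1+\delta)\|r\|^2$ and $\max_i|\rho_i|\le\|r\|$.

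\emph{Step 1 (a priori bound on $c_1$).} I would feed the unit tangent vector $v := P_wu_1/\|P_wu_1\| = (u_1-c_1 w)/\|r\|$ into the second-order condition; it satisfies $v^Tw = 0$ and $u_1^Tv = (1-c_1^2)/\|r\| = \|r\|$. Since $P_wv = v$, the inequality $v^T\nabla^2_{\mathcal M}f(w)v\ge 0$ reads $\lambda\ge 3\sum_i c_i^2(u_i^Tv)^2\ge 3c_1^2\|r\|^2$, while the ordering $c_1\ge\cdots\ge c_k$ gives $\lambda = \sum_i c_i^4\le c_1^2\sum_i c_i^2\le(1+\delta)c_1^2$. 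Hence $\|r\|^2\le(1+\delta)/3<\tfrac12$, and in particular $c_1\ge 1/\sqrt2$. \emph{Step 2 (first order along $u_1^{\perp}$).} Projecting \eqref{eq: first_order_result} by $\Pperp$ and using $\Pperp u_1 = 0$ gives $\lambda r = \sum_{i\ge2}(w^Tu_i)^3\,\Pperp u_i$ with $\Pperp u_i = u_i-\beta_i u_1$. Expanding $(w^Tu_i)^3 = (c_1\beta_i+\rho_i)^3$ splits the right-hand side into four sums, each of the form $\sum_{i\ge2}a_i\Pperp u_i = \sum_{i\ge2}a_i u_i - (\sum_{i\ge2}a_i\beta_i)u_1$, whose norm is at most $\sqrt{1+k\tau}\,\|a\|_2 + |\sum_{i\ge2}a_i\beta_i|$ by near-orthonormality of the $u_i$. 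Estimating these with $|\beta_i|<\tau$, $\sum_{i\ge2}\beta_i^6\le k\tau^6$, $\sum_i\rho_i^2\le(1+\delta)\|r\|^2$, $\max_i|\rho_i|\le\|r\|$, $c_1\le1$ yields
\begin{align}
\lambda\|r\| \;\le\; C_1\sqrt{k}\,\tau^3 + C_2\sqrt{k}\,\tau^2\|r\| + C_3\,\tau\|r\|^2 + C_4\,\|r\|^3 \nonumber
\end{align}
for absolute constants $C_1,\dots,C_4$ with $C_4$ small (say $\le 2$); it is essential here to keep the tail sums $\sum\rho_i^2,\sum\rho_i^6$ rather than using $\|\Pperp u_i\|\le 1$, which would only give $\|r\| = \bO(\sqrt k\,\tau)$.

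\emph{Step 3 (lower bound on $\lambda$ and conclusion).} Taking the inner product of \eqref{eq: first_order_result} with $u_1$ gives $\lambda c_1 = c_1^3 + \sum_{i\ge2}(w^Tu_i)^3\beta_i$, whence $|\lambda-c_1^2|\le \tau\sum_{i\ge2}c_i^3/c_1\le \tau\,c_2\sum_{i\ge2}c_i^2/c_1\le\tau(\delta+\|r\|^2)$, using $\sum_{i\ge2}c_i^2 = \|U^Tw\|^2-c_1^2\le\delta+\|r\|^2$ and $c_2\le c_1$; therefore $\lambda\ge 1-\|r\|^2-\tau(\delta+\|r\|^2)$. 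Substituting into Step 2 and rearranging,
\begin{align}
\|r\|\Big(1-(1+\tau)\|r\|^2-\tau\delta-C_2\sqrt{k}\tau^2-C_3\tau\|r\|-C_4\|r\|^2\Big) \;\le\; C_1\sqrt{k}\,\tau^3. \nonumber
\end{align}
By Step 1, $\|r\|^2<1/2$, and under Assumption~\ref{ass: incoherence} with $k\tau\le 0.05$ the quantities $\tau,\delta,\sqrt k\,\tau^2$ are all $o(1)$ in high dimension, so the bracketed factor is bounded below by a positive absolute constant. This gives $\|r\| = \bO(\sqrt{k}\,\tau^3)$, and since $c_1 = |u_1^Tw|\ge 1/\sqrt2$ by Step 1 we conclude $\|\Pperp(w)\|/|u_1^Tw| = \bO(\sqrt{k}\,\tau^3)$. (Reinserting this bound into Step 3 upgrades $c_1\ge 1-o(1)$, if desired.)

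\emph{Main obstacle.} The delicate point is Step 1: I need \emph{some} lower bound on $|u_1^Tw|$ before the sharp estimate is available, and the choice $v\propto P_wu_1$ in the second-order condition is precisely what produces it. One then has to check that the positive margin $\tfrac12-(1+\tau)\|r\|^2$ in Step 3 genuinely dominates the cubic error term $C_4\|r\|^3$, which is why Step 2 must be executed with the tail-sum bounds above so that $C_4$ remains a small constant; if the margin were too tight one would simply iterate the bootstrap once more. Everything else is routine bookkeeping with incoherence and RIP.
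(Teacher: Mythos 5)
Your proof is correct in its overall structure but takes a genuinely different route from the paper's, and it is worth spelling out what each buys.

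The paper's proof is a ratio bootstrap: starting from $\lambda w = z := \sum_i \tilde c_i^{\,3}u_i$, it writes $\|\Pperp(w)\|/|u_1^Tw| = \|\Pperp(z)\|/|u_1^Tz|$, lower-bounds $|u_1^Tz|\ge 0.98\,|c_1|^3$, and upper-bounds $\|\Pperp(z)\|$ via Lemma~\ref{lemma: power6}; the crucial input is Lemma~\ref{lemma: cgaps} ($|c_1|>\sqrt2\,|c_2|$), which yields $|u_i^T\Pperp(w)|\le |c_1|(1/\sqrt2+\tau)$ and makes the coefficient in the resulting self-improving linear inequality $\frac{\|\Pperp(w)\|}{|u_1^Tw|}\le\max\bigl(0.9\,\frac{\|\Pperp(w)\|}{|u_1^Tw|},\,\bO(\sqrt k\,\tau^3)\bigr)$ strictly less than one. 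You instead bypass Lemma~\ref{lemma: cgaps} entirely and get the a priori contraction by testing the second-order condition against $v\propto P_wu_1$, giving $\|r\|^2\le(1+\delta)/3$ (and so $c_1\ge1/\sqrt2$) directly; you then project \eqref{eq: first_order_result} onto $u_1^\perp$, expand $\tilde c_i^{\,3}=(c_1\beta_i+\rho_i)^3$, and close a cubic polynomial inequality in $\|r\|$ rather than a linear self-improving one. Your version is arguably more self-contained (it doesn't import Lemma~\ref{lemma: cgaps}), and the cubic inequality makes the noise floor $\sqrt k\,\tau^3$ transparent; the paper's version is shorter because the gap lemma pre-packages the tail control. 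Both exploit the same two facts — the tail inner products are $\le\|r\|$ up to RIP, and $|z^Tu_1|$ is comparable to $c_1^3$ — but deploy them in different shapes.

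One numerical point to tighten. Your parenthetical claim that it suffices to have $C_4\le 2$ is slightly too loose: with $\|r\|^2\le(1+\delta)/3$, the bracket $1-(1+\tau+C_4)\|r\|^2-\cdots$ requires $1+\tau+C_4 < 3/(1+\delta)\approx 2.86$, i.e.\ $C_4\lesssim 1.8$. The bound $C_4\le 2$ would make the margin negative. Fortunately, the tail-sum bounds you describe (Cauchy–Schwarz with $\max_i|\rho_i|\le\|r\|$ and $\sum_i\rho_i^2\le(1+\delta)\|r\|^2$) give $C_4\le(1+\delta)(1+\sqrt k\,\tau)\approx1.1$, so the argument does go through — you just need to record the sharper value of $C_4$ explicitly rather than a generous ``$\le 2$''. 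Also, the phrase ``$o(1)$ in high dimension'' in Step 3 should be replaced by the deterministic bounds $\tau\le 0.05$, $\delta\le k\tau\le 0.05$, $\sqrt k\,\tau^2\le 0.05\,\tau$, since the lemma as stated is a deterministic implication of $k\tau\le 0.05$, not an asymptotic one.
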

\begin{proof}
    
    Assume that $z = \sum_i (w^Tu_i)^3 u_i$. Based on~\eqref{eq: first_order_result}, we have $z = \lambda w$. Thus:  
    
    \begin{align}
        \frac{\|\Pperp(w)\|}{|w^Tu_1|} = \frac{\|\Pperp(z)\|}{|z^Tu_1|}
    \end{align}
    According to the previous lemma, $|c_1| > \sqrt{2}|c_i|$ for any $i \in \{2, 3, \dots, k\}$. Therefore,
    
    \begin{align}
        |z^Tu_1| = \bigg|c_1^3 + \sum_i c_i^3 u_i^T u_1\bigg| \geq |c_1|^3 - \frac{k\tau}{2\sqrt{2}}|c_1|^3\geq 0.98 |c_1|^3. \label{eq:lemma1_eq1}
    \end{align}
    Moreover,
    \begin{align}
        \|\Pperp(z)\|^2=\bigg\|\Pperp\bigg(\sum_{i\neq 1} (w^Tu_i)^3 u_i\bigg)\bigg\|^2\leq (1+\delta)\sum_{i\neq 1} c_i^6 \nonumber
    \end{align}
    Note that $c_i = c_1 u_1^T u_i + \Pperp(w)^T u_i$. To find an upper-bound for $c_i^6$, we use the following lemma.
    \begin{lemma}
        For any two real numbers $a$ and $b$, we have $(a+b)^6\leq 1.01 a^6 + \bO(b^6)$. \label{lemma: power6}
    \end{lemma}
    \begin{align}
        \|\Pperp(z)\|^2 \leq 1.01 (1 + \delta) \sum_{i\neq 1} (u_i^T \Pperp(w))^6 + \underbrace{\bO(\sum_{i\neq 1}c_i^6\tau^6)}_{\leq c_1^6 \bO(k \tau^6)}
    \end{align}
    Since $|u_i^T \Pperp(w)| \leq |c_i|+|c_1|\tau\leq |c_1|(\frac{1}{\sqrt{2}}+\tau)$ for $i\neq 1$, we have: 
    \begin{align}
        \|\Pperp(z)\|^2 &\leq (1 + \delta)c_1^4 (\frac{1}{\sqrt{2}}+\tau)^4 \underbrace{\sum_{i\neq 1}(u_i^T \Pperp(w))^2}_{\leq (1+\delta)\|\Pperp(w)\|^2} + c_1^6 \bO(k\tau^6)\nonumber\\
        &\leq 2c_1^4\max\bigg((\frac{1}{\sqrt{2}}+\tau)^4(1+\delta)^2\|\Pperp(w)\|^2, c_1^2 \bO(k\tau^6)\bigg) \label{eq:lemma1_eq2}
    \end{align}
    Combining \eqref{eq:lemma1_eq1} and~\eqref{eq:lemma1_eq2}, we obtain: 
    \begin{align}
        \frac{\|\Pperp(w)\|}{\underbrace{|w^Tu_1|}_{|c_1|}} &= \frac{\|\Pperp(z)\|}{|z^Tu_1|} \nonumber\\
        &\leq \frac{1.01}{0.98}\max\bigg(\sqrt{2}\big(\frac{1}{\sqrt{2}}+\tau\big)^2(1+\delta){\frac{\|\Pperp(w)\|}{c_1}}
        , \bO(\sqrt{k}\tau^3)\bigg).
    \end{align}
    Now note that
    \begin{align}
        \frac{1.01}{0.98}\sqrt{2}\big(\frac{1}{\sqrt{2}}+\tau\big)^2(1+\delta)\leq 0.9.\nonumber
    \end{align}
    Thus,
    \begin{align}
        \frac{\|\Pperp(w)\|}{|w^Tu_1|} \leq \max \bigg(0.9\frac{\|\Pperp(w)\|}{|w^Tu_1|}, \bO(\sqrt{k}\tau^3)\bigg),
    \end{align}
    which completes the proof.
    
\end{proof}


\begin{theorem}
     If $k\tau\leq 0.05$, then for any local minimizer $w$ of \eqref{eq: opt1} there exists an index $i$ such that $\|w-u_i\|\leq \bO(\sqrt{k}\tau^3)$. \label{theorem: geometric_analysis}
\end{theorem}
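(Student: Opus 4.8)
The plan is to simply combine the two preceding lemmas. Lemma~\ref{lemma: cgaps} together with the ordering convention $|c_1|\ge|c_2|\ge\cdots\ge|c_k|$ fixes which component $w$ can be close to, and Lemma~\ref{lemma: correlation} already does the real work: at any local minimizer it gives $\|\Pperp(w)\|\le \bO(\sqrt{k}\tau^3)\,|u_1^Tw|$, i.e. the part of $w$ orthogonal to $u_1$ is tiny \emph{relative} to $|u_1^Tw|$. All that remains is to convert this relative bound into an absolute bound on $\|w-u_1\|$ using the unit-norm constraint, which is a short computation.

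First I would use the orthogonal decomposition $w=(u_1^Tw)\,u_1+\Pperp(w)$, valid for every $w$ since $\Pperp(w)\perp u_1$, so that $\|w\|=1$ yields $1=(u_1^Tw)^2+\|\Pperp(w)\|^2$. Substituting the bound of Lemma~\ref{lemma: correlation} gives $1\le (u_1^Tw)^2\bigl(1+\bO(k\tau^6)\bigr)$, hence $(u_1^Tw)^2\ge 1-\bO(k\tau^6)$; since also $|u_1^Tw|\le \|u_1\|\,\|w\|=1$, this forces $1-|u_1^Tw|\le \bO(k\tau^6)$.

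Next I would pick the sign: replacing $u_1$ by $-u_1$ if necessary (the statement, as in the orthogonal case, is understood up to the inherent sign ambiguity of the components), we may assume $u_1^Tw\ge 0$, so $u_1^Tw=|u_1^Tw|$. Then
\begin{align*}
    \|w-u_1\|^2 = \|w\|^2 - 2\,u_1^Tw + \|u_1\|^2 = 2\bigl(1-u_1^Tw\bigr)\le \bO(k\tau^6),
\end{align*}
and taking square roots gives $\|w-u_1\|\le \bO(\sqrt{k}\tau^3)$, which is the claim with $i=1$ (i.e. the index achieving the largest $|u_i^Tw|$).

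I do not expect a genuine obstacle at this stage, since all of the nontrivial analysis—the second-order-condition argument producing the spectral gap in Lemma~\ref{lemma: cgaps} and the fixed-point contraction $\tfrac{\|\Pperp(w)\|}{|u_1^Tw|}\le 0.9\,\tfrac{\|\Pperp(w)\|}{|u_1^Tw|}+\bO(\sqrt{k}\tau^3)$ in Lemma~\ref{lemma: correlation}—is already in place. The only point worth a sentence is the degenerate case $\lambda=\sum_i(u_i^Tw)^4=0$, in which $u_1^Tw=0$ and Lemma~\ref{lemma: correlation} is vacuous; but such a $w$ cannot be a local minimizer, because $f(w)=0$ is the maximal value of $-\tfrac14 T(w,w,w,w)$ over the sphere and moving $w$ infinitesimally along (the tangential component of) any $u_i$ strictly decreases $f$, so this case may simply be excluded at the outset.
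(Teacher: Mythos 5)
Your proof is correct and follows essentially the same route as the paper's: apply Lemma~\ref{lemma: correlation}, square the ratio bound, use $\|w\|^2 = c_1^2 + \|\Pperp(w)\|^2 = 1$ to force $c_1 \ge 1 - \bO(k\tau^6)$, and conclude via $\|w-u_1\|^2 = 2(1-c_1)$. You are slightly more careful than the paper in two spots — you make the sign choice $u_1^Tw \ge 0$ explicit, and you exclude the degenerate case $c_1 = 0$ (where Lemma~\ref{lemma: correlation} is vacuous and the strict inequality in Lemma~\ref{lemma: cgaps}'s proof would actually collapse to an equality) by noting that such a $w$ attains the global maximum $f=0$ and can be strictly decreased along any $u_i$, so it cannot be a local minimizer — both of which the paper leaves implicit.
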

\begin{proof}
    Based on the above lemma and without loss of generality, assume that $\frac{\|\Pperp(w)\|}{|w^Tu_1|} \leq \bO(\sqrt{k}\tau^3)$. Thus, $\|\Pperp(w)\|^2 = 1-c_1^2 \leq c_1^2 \bO(k\tau^6)$. Therefore,
    \begin{align}
        c_1 \geq \frac{1}{\sqrt{1+\bO(k\tau^6)}} \geq 1-\bO(k\tau^6)\nonumber.
    \end{align}
    Now we have $\|w-u_1\|^2 = 2(1-c_1)\leq \bO(k\tau^6)$.
\end{proof}


\begin{corollary}
    Note that in the case where the components are randomly generated with dimension $d$ our result shows that when $k\leq\bO(\sqrt{d})$ there are no spurious local minima.
\end{corollary}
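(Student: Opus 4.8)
The plan is to instantiate Assumption~\ref{ass: incoherence} in a concrete random model and then invoke Theorem~\ref{theorem: geometric_analysis} verbatim. Concretely, take $u_1,\dots,u_k$ to be drawn independently and uniformly from the unit sphere $S^{d-1}$ (equivalently, independent standard Gaussian vectors rescaled to unit norm). Note that~\eqref{eq: opt1} is already the equal-weight model, so $\kappa=1$ and no condition on the ratio of weights is needed; the only things to verify (with high probability) are the incoherence bound $\tau=\bO(1/\sqrt d)$ and the RIP bound $\delta=\bO(\sqrt{k/d})$.

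For incoherence, I would use the standard sub-Gaussian estimate: for a fixed pair $i\neq j$, $\Pr\big[|u_i^Tu_j|>t\big]\le 2\exp(-cdt^2)$ for an absolute constant $c>0$. A union bound over the $\binom k2$ pairs gives $\max_{i\neq j}|u_i^Tu_j|\le\bO\big(\sqrt{\log k/d}\big)$ with probability $1-k^{-\Omega(1)}$, which for $k=\bO(\sqrt d)$ is $\tilde\bO(1/\sqrt d)$, matching the stated $\tau=\bO(1/\sqrt d)$ up to the (unavoidable) logarithmic factor. For the RIP constant I would control the extreme singular values of $U=[u_1,\dots,u_k]\in\mathbb R^{d\times k}$ (well-posed since $k=\bO(\sqrt d)\le d$) by a standard $\varepsilon$-net argument for random matrices: with high probability $\sigma_{\min}(U)^2\ge 1-\bO(\sqrt{k/d})$ and $\sigma_{\max}(U)^2\le 1+\bO(\sqrt{k/d})$, i.e.\ $\delta=\bO(\sqrt{k/d})$, consistent with the deterministic bound $\delta\le(k-1)\tau$ recorded after Assumption~\ref{ass: incoherence}.

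It then remains to check the numerical hypotheses. When $k\le c_0\sqrt d$ for a sufficiently small absolute constant $c_0$ and $d$ is large, $k\tau=\bO(d^{-1/4}\sqrt{\log d})$ is below $0.01$, and since $\delta\le (k-1)\tau< k\tau$ the same bound forces $\delta\le 0.01$; this covers the thresholds used in Lemma~\ref{lemma: cgaps}, Lemma~\ref{lemma: correlation}, and Theorem~\ref{theorem: geometric_analysis} (if one wants a clean absolute constant rather than absorbing logarithms, the honest scaling is $k=\bO(\sqrt{d/\log d})$). With Assumption~\ref{ass: incoherence} in force, Theorem~\ref{theorem: geometric_analysis} gives that every local minimizer $w$ of~\eqref{eq: opt1} satisfies $\|w-u_i\|\le\bO(\sqrt k\,\tau^3)=\bO(\sqrt k/d^{3/2})\to 0$ for some index $i$; hence no local minimizer can be bounded away from all the components, i.e., there are no spurious local minima, which is the assertion of the corollary.

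The probabilistic steps are entirely routine — this is exactly what the abstract means by ``standard techniques'' — so the only real subtlety is bookkeeping: reconciling the clean $\tau=\bO(1/\sqrt d)$ and $\delta=\bO(\sqrt{k/d})$ statements with the genuine $\sqrt{\log k}$ factor a union bound produces, and confirming that the several small absolute-constant thresholds ($k\tau\le 0.05$ and $\delta\le 0.01$ across Lemma~\ref{lemma: cgaps}, Lemma~\ref{lemma: correlation}, and Theorem~\ref{theorem: geometric_analysis}) are met simultaneously in the $k=\bO(\sqrt d)$ regime. I expect that minor bookkeeping to be the main obstacle; everything else is a direct substitution into Theorem~\ref{theorem: geometric_analysis}.
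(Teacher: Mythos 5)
Your overall plan is exactly the paper's intended one: the paper offers no separate proof of this corollary (it is phrased as a remark after Theorem~\ref{theorem: geometric_analysis}), and the abstract flags this step as a routine translation of the deterministic assumptions into high-probability bounds under a random model. Your instantiation with i.i.d.\ uniform spherical (or normalized Gaussian) $u_i$'s, sub-Gaussian union bound for $\tau$, and singular-value/$\varepsilon$-net bound for $\delta$, followed by plugging into Theorem~\ref{theorem: geometric_analysis} with $\kappa=1$, is precisely that translation.

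There is, however, one concrete arithmetic slip worth flagging. You assert ``$k\tau=\bO(d^{-1/4}\sqrt{\log d})$ is below $0.01$,'' but with $\tau = \bO(\sqrt{\log k / d})$ and $k = \Theta(\sqrt d)$ you actually get $k\tau = \bO(\sqrt d)\cdot \bO(\sqrt{\log d / d}) = \bO(\sqrt{\log d})$, which diverges; the quantity that scales like $d^{-1/4}\sqrt{\log d}$ is $\sqrt{k}\,\tau$, not $k\tau$. Consequently the chain ``$\delta\le(k-1)\tau<k\tau$ forces $\delta\le 0.01$'' does not go through either (you should use the direct RIP estimate $\delta = \bO(\sqrt{k/d}) = \bO(d^{-1/4})$ instead, which does vanish). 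This does not break the argument, because you already correct it with the hedge that the honest scaling is $k=\bO(\sqrt{d/\log d})$, or equivalently $k\le c_0\sqrt d$ with $c_0$ small enough to absorb the $\sqrt{\log k}$ factor into the $0.05$ threshold of Theorem~\ref{theorem: geometric_analysis}; but as written, the displayed rate for $k\tau$ is wrong and should be fixed so that the verification of the hypothesis $k\tau\le 0.05$ is actually justified rather than asserted. With that correction, the proposal is a complete and correct rendering of the paper's (implicit) proof.
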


\section{Extension to the Non-equally Weighted Scenario}
In this section, we extend the result of the previous section to the case when the tensor decomposition components have not equal weights. In this scenario, the tensor of interest $T$ is in the form of:

\begin{align}
    T = \sum_{i=1}^k \lambda_i u_i^{\otimes 4}.\nonumber
\end{align}

Thus, the optimization problem~\eqref{eq: opt1} turns to:

\begin{equation}\label{eq: extended_opt1}
\begin{split}
    \min_w \quad &~f(w) = -\frac{1}{4}\sum_{i=1}^k \lambda_i (u_i^T w)^4 = -\frac{1}{4}T(w,w,w,w),\\
    \text{s.t.}\quad &~\|w\|^2 = 1. 
\end{split}
\end{equation}

Let $\lambda_{max}$ and $\lambda_{min}$ be the maximum and minimum values among $\{\lambda_i\}_{i=1}^{k}$, respectively. The following theorem demonstrates that under an additional assumption on the ratio of $\lambda_{max}$ to $\lambda_{min}$, any local minimizer of the optimization problem~\eqref{eq: extended_opt1} is very close to one of the actual components $u_i$.

\begin{theorem}
     If $k\tau\leq 0.05$ and $\kappa = \frac{\lambda_{max}}{\lambda_{min}} \leq \frac{5}{4}$ then for any local minimizer $w$ of \eqref{eq: extended_opt1} there is an index $i$ such that $\|w-u_i\|\leq \bO(\kappa \sqrt{k}\tau^3)$.
\end{theorem}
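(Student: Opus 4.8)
The plan is to re-run, essentially line by line, the three-step argument of Section~\ref{geometric_analysis}, carrying the weights $\lambda_i$ through every estimate. The only structural change is that wherever the equally weighted proof balanced a ``gain'' term against a ``loss'' term with matching coefficient, the weighted version balances $\lambda_{max}\cdot(\text{gain})$ against $\lambda_{min}\cdot(\text{loss})$, so the ratio $\kappa$ enters and must be absorbed; the bound $\kappa\le\tfrac54$ is precisely what keeps the relevant inequalities strict. As a preliminary, the optimality conditions for~\eqref{eq: extended_opt1} read $P_w\sum_i\lambda_i(w^Tu_i)^3u_i=0$ and $(\sum_i\lambda_i(u_i^Tw)^4)P_w-3\sum_i\lambda_i(u_i^Tw)^2P_wu_iu_i^TP_w\succeq\mathbf{0}$, so the first-order condition still forces $(\sum_i\lambda_i(w^Tu_i)^4)\,w=\sum_i\lambda_i(w^Tu_i)^3u_i$ and hence $w\in\mathrm{span}\{u_i\}$.

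First I would redo Lemma~\ref{lemma: cgaps}: take a unit $v\in\mathrm{span}(u_1,u_2)$ with $v\perp w$ and assume for contradiction that $|c_1|\le\sqrt2\,|c_2|$. Bounding $\sum_i\lambda_i(w^Tu_i)^4\le\lambda_{max}c_1^2\sum_i(w^Tu_i)^2\le\lambda_{max}c_1^2(1+\delta)$ and, keeping only the $i\in\{1,2\}$ terms, $3\sum_i\lambda_i(u_i^Tw)^2(u_i^Tv)^2\ge 3\lambda_{min}\min_{i=1,2}(u_i^Tw)^2\,\|[u_1,u_2]^Tv\|^2\ge\tfrac32\lambda_{min}c_1^2(1-\delta)$, one gets $v^T\nabla^2_\mathcal{M}f(w)v\le c_1^2(\lambda_{max}(1+\delta)-\tfrac32\lambda_{min}(1-\delta))$, which is negative whenever $\kappa<\tfrac32\cdot\tfrac{1-\delta}{1+\delta}$; since $\delta\le k\tau\le0.05$ this is guaranteed by $\kappa\le\tfrac54$, contradicting second-order optimality, so $|c_1|>\sqrt2\,|c_2|$.

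Next I would redo Lemma~\ref{lemma: correlation} with $z=\sum_i\lambda_i(w^Tu_i)^3u_i=\lambda w$. The denominator estimate becomes $|z^Tu_1|\ge\lambda_1|c_1|^3-\lambda_{max}\tau\sum_{i\ne1}|c_i|^3\ge\lambda_{min}|c_1|^3(1-\kappa\tfrac{k\tau}{2\sqrt2})\ge 0.97\,\lambda_{min}|c_1|^3$ under the hypotheses. In the numerator, RIP gives $\|\Pperp(z)\|^2\le(1+\delta)\sum_{i\ne1}\lambda_i^2c_i^6\le(1+\delta)\lambda_{max}^2\sum_{i\ne1}c_i^6$ --- a single extra factor $\lambda_{max}^2$ --- after which the remaining steps (writing $c_i=c_1u_1^Tu_i+\Pperp(w)^Tu_i$, applying Lemma~\ref{lemma: power6}, using $|u_i^T\Pperp(w)|\le|c_1|(\tfrac1{\sqrt2}+\tau)$ and $\sum_{i\ne1}(u_i^T\Pperp(w))^2\le(1+\delta)\|\Pperp(w)\|^2$) are unchanged. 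Dividing numerator by denominator gives a self-referential inequality for $\|\Pperp(w)\|/|w^Tu_1|$ whose multiplicative coefficient is $\kappa$ times (essentially) the constant already controlled in the equal-weight proof; since $\kappa\le\tfrac54$ and $\tau,\delta$ are small this coefficient is still strictly below $1$, so the recursion collapses and $\|\Pperp(w)\|/|w^Tu_1|\le\bO(\kappa\sqrt k\tau^3)$, the surviving $\kappa$ being the $\lambda_{max}/\lambda_{min}$ prefactor on the residual $\bO(\sqrt k\tau^3)$ term.

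Finally, exactly as in Theorem~\ref{theorem: geometric_analysis}, from $1-c_1^2=\|\Pperp(w)\|^2\le c_1^2\,\bO(\kappa^2k\tau^6)$ one gets $c_1\ge 1-\bO(\kappa^2k\tau^6)$, hence $\|w-u_1\|^2=2(1-c_1)\le\bO(\kappa^2k\tau^6)$, i.e.\ $\|w-u_1\|\le\bO(\kappa\sqrt k\tau^3)$. I expect the only real obstacle to be the constant bookkeeping: the binding requirement is that the Hessian inequality in the first step be strict, which needs $\kappa<\tfrac32\cdot\tfrac{1-\delta}{1+\delta}$, and $\tfrac54$ is a clean value within that window (the requirement coming out of the second step is weaker); to certify both one must track the $(1+\delta)$, $(\tfrac1{\sqrt2}+\tau)^2$ and $\sqrt{1.01}$ factors somewhat more carefully than the loose bound used in the equal-weight case, but once these numeric checks are done the proof is a verbatim transcription of Lemmas~\ref{lemma: cgaps}--\ref{lemma: correlation} and Theorem~\ref{theorem: geometric_analysis}.
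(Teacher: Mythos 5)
Your proposal follows the paper's own proof step by step: the Hessian argument for the $|c_1|>\sqrt2|c_2|$ gap with the $\kappa<\frac32\cdot\frac{1-\delta}{1+\delta}$ threshold, the ratio bound for $\|\Pperp(w)\|/|w^Tu_1|$ with the extra $\lambda_{max}$ in the numerator and $\lambda_{min}$ in the denominator producing the self-improving inequality with coefficient $\propto\kappa$, and the same final conversion to $\|w-u_1\|\le\bO(\kappa\sqrt k\tau^3)$. The one place you are slightly more careful than the paper is the denominator estimate $|z^Tu_1|\ge\lambda_{min}|c_1|^3(1-\kappa\tfrac{k\tau}{2\sqrt2})$, where the paper drops the $\kappa$ inside the parenthesis; this is a harmless tightening, not a different route, so the two proofs are essentially identical.
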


We leave the optimality conditions of problem~\eqref{eq: extended_opt1}, the extension of lemma~\ref{lemma: cgaps}, lemma~\ref{lemma: correlation}, and the proof of the above theorem to appendix~\ref{Extension}.

\section{Recovery via Tensor Power Method and Deflation}\label{recovery}
The landscape analysis in Section~\ref{geometric_analysis} demonstrates that to recover at least one of the true components $u_i$ of a given tensor $T$, with the error of at most $\bO(\kappa \sqrt{k}\tau^3)$, it suffices to find any local optimum of problem~\eqref{eq: extended_opt1}. To recover the other components of $T$, we can deflate the obtained component from $T$, and find a local optimizer of problem~\eqref{eq: extended_opt1} for the deflated tensor. However, the introduced deflation error could potentially make the problem of finding the remaining $u_i$ components difficult. In Section~\ref{sec: TPM} we show that TPM can tolerate error residuals from deflation if it is well-initialized. In other words, TPM can recover all $u_i$'s within a noise floor of $\bO(\kappa \sqrt{k}\tau^3)$. Similar to the geometric result, our convergence guarantee for TPM with "good" initialization is deterministic. 

Finally, in Section~\ref{sec: TPM_init}, we provide a probabilistic argument to show that when $d$ is large after restarting TPM randomly for $L = poly(k, d)$ times, with high probability, we obtain a "good" initialization. This concludes that TPM with restarts and deflation can efficiently find all $u_i$'s within some noise floor $\bO(\sqrt{k}\tau^3)$.

\subsection{Convergence of Well-initialized TPM with Deflation}\label{sec: TPM}
Tensor Power Method is one of the most widely used algorithms for solving tensor decomposition problems. For a given tensor $T$, and vectors $x, y,$ and $z$, consider the following mapping:
\begin{align}
    T(I, x, y, z) = \sum_{i=1}^{d} \sum_{j=1}^{d} \sum_{k=1}^{d} \sum_{\ell=1}^{d} T(i, j, k, \ell) x(j) y(k) z(\ell) e_i, \label{TPM_mapping}
\end{align}
where $e_i$ is the $i$-th standard unit vector. 
 
Assume that a tensor $T$ and initialization $w_0$ is given. At each iteration, TPM applies vector-valued mapping~\eqref{TPM_mapping} to current $w$, which is initialized to $w_0$. Then, it normalizes the resulted vector, and update $\lambda$, by applying mapping~\eqref{lambda_map} to $w$. The details of  Tensor Power Method are summarized in Algorithm \ref{alg: TPM}.

\begin{algorithm}\caption{$TPM(T, w_0, \mathcal{T})$}\label{alg: TPM}
\begin{algorithmic}
\FOR {$t=1,\cdots,\mathcal{T}$}
    \STATE $w_t = T(I, w_{t-1},w_{t-1},w_{t-1})/\|T(I, w_{t-1},w_{t-1},w_{t-1})\|$
    \STATE $\lambda_t = T(w_t, w_{t},w_{t},w_{t})$
\ENDFOR
\RETURN $(w_\mathcal{T}$, $\lambda_\mathcal{T})$
\end{algorithmic}
\end{algorithm}
The following theorem shows that when $k\tau$ is small enough, applying algorithm~\ref{alg: TPM} to a deflated version of tensor $T$ can recover all $u_i$ components.

\begin{theorem}\label{thm: TPM}
Fix $1 \leq i\leq k$. Assume that we have access to $\hat{u}_j\in \mathbb{R}^d$ and $\hat{\lambda}_j\in \mathbb{R}$, $j=1,\cdots, i-1$ such that there are appropriate absolute constants $c_1$ and $c_2$ where:
\begin{itemize}
    \item $k\tau\leq c_1$
    \item $\|\hat{u}_j-u_j\|\leq \epsilon = c_2 \sqrt{k}\tau^3$ and $|1-\hat{\lambda}_j|\leq 5 \epsilon$.
\end{itemize}
Moreover, assume that we have an initial unit vector $w_0$, for which $|w_0^Tu_i|\geq \tau$ and $|w_0^Tu_i|\geq 2|w_0^Tu_j|,~\forall~j\neq i$. Define the deflated tensor $T_i = T + \sum_{j=1}^{i-1}-\hat{\lambda}_j \hat{u}_j^{\otimes 4}$. Suppose that we apply TPM for appropriate number of iterations $\mathcal{T}= \bO(1) + \bO(\log(1/\sqrt{k}\tau^4))$ to obtain $(\hat{u}_i, \hat{\lambda}_i) = TPM(T_i, w_0, \mathcal{T})$. Then, we have
\begin{align}
    \|\hat{u}_i-u_i\| \leq \epsilon~~and~~|1-\hat{\lambda}_i|\leq 5\epsilon.
\end{align}

\end{theorem}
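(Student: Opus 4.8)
The plan is to track the evolution of the correlations $c_i^{(t)} = |w_t^T u_i|$ under the TPM iteration applied to the deflated tensor $T_i$, and show that the dominant correlation $c_i^{(t)}$ converges to $1$ while all other correlations (to true components $u_j$ with $j\neq i$, and to the deflation directions $\hat u_j$) stay negligible. First I would write the unnormalized update $\tilde w_{t+1} = T_i(I, w_t, w_t, w_t) = \sum_{j\ge i}\lambda_j (w_t^T u_j)^3 u_j + \sum_{j<i}\bigl((w_t^T u_j)^3 u_j - \hat\lambda_j (w_t^T\hat u_j)^3 \hat u_j\bigr)$, and observe that the first sum is the ``signal'' and the bracketed terms are a perturbation. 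Using $\|\hat u_j - u_j\|\le\epsilon$ and $|1-\hat\lambda_j|\le 5\epsilon$, each bracketed term has norm $\bO(\epsilon)$ times a cubic factor in the relevant correlations, so the total deflation error contributes at most $\bO(\sqrt{k}\,\epsilon)$ in operator terms; combined with the true-tensor cross terms controlled by incoherence $\tau$ and RIP $\delta$, exactly as in the proof of Lemma~\ref{lemma: correlation}, I get $|\tilde w_{t+1}^T u_i| \geq c_i^{(t)\,3}(1 - \bO(k\tau) - \bO(\sqrt{k}\epsilon))$ and, for the off-directions, $|\tilde w_{t+1}^T u_j| \leq \bO(\tau)\,(c_i^{(t)})^3 + \bO(c_j^{(t)\,3})$ plus an $\bO(\epsilon)$ deflation leakage.

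Next I would set up the key ratio recursion: letting $r_t = \max_{j}\bigl(|w_t^T u_j| : j\neq i\bigr)\big/|w_t^T u_i|$ (and separately the ratio against the $\hat u_j$ directions), the cubic homogeneity of the map gives $r_{t+1} \lesssim r_t^3 + (\text{additive noise floor})$, where the noise floor is $\bO(\kappa\sqrt{k}\tau^3) + \bO(\kappa\epsilon)$ coming from the $\tau$-cross terms and the deflation residual, and the $\kappa$ factor from $\lambda_{max}/\lambda_{min}$ enters when we divide signal by the worst-case weight. The hypothesis $|w_0^T u_i|\ge 2|w_0^T u_j|$ gives $r_0\le 1/2$, so the cubic contraction $r_{t+1}\le \tfrac18 r_t + (\text{noise})$ drives $r_t$ geometrically down to $\bO(\kappa\sqrt{k}\tau^3)$ within $\bO(1)$ iterations once one checks that $|w_0^T u_i|\ge\tau$ keeps us away from the degenerate regime where the signal term itself is too small (this is where the lower bound on $|w_0^T u_i|$ is used — it guarantees $c_i^{(t)}$ grows rather than being swamped). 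After $r_t$ is below the noise floor, $c_i^{(t)2} = 1 - \sum_{j\neq i} (\text{stuff})$ forces $c_i^{(t)}\ge 1-\bO(k\tau^6)$ by the same computation as in Theorem~\ref{theorem: geometric_analysis}, and one more $\bO(\log(1/\sqrt{k}\tau^4))$ iterations of the now essentially-orthogonal power iteration on $u_i$ refines the estimate to $\|\hat u_i - u_i\|\le\epsilon = c_2\sqrt{k}\tau^3$. Finally, the claim $|1-\hat\lambda_i|\le 5\epsilon$ follows by plugging $\hat u_i$ into $T_i(\hat u_i,\hat u_i,\hat u_i,\hat u_i) = \sum_{j\ge i}\lambda_j (\hat u_i^T u_j)^4 + (\text{deflation terms})$, using $\lambda_i\in[1-\bO(\epsilon),1]$ by normalization conventions, $(\hat u_i^T u_i)^4 \ge 1-\bO(\epsilon)$, the incoherence bound on the $j\neq i$ terms ($\bO(k\tau^4)\ll\epsilon$), and the assumed bound on the already-extracted $\hat\lambda_j$'s.

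I expect the main obstacle to be the bookkeeping that shows the deflation error does \emph{not} amplify across the $\bO(1)$ phase-one iterations: one must verify that the $\bO(\epsilon)$ leakage injected at each step into the off-directions is itself contracted by the cubic map rather than accumulating, so that the final $r_\mathcal{T}$ (and hence $\|\hat u_i - u_i\|$) lands at $\bO(\kappa\sqrt{k}\tau^3)$ and \emph{not} at $\bO(\mathcal{T}\epsilon)$ — this requires the constants $c_1$ (bounding $k\tau$) and $c_2$ (defining $\epsilon$) to be chosen so that the self-consistent fixed point of the recursion $r\mapsto \tfrac18 r + C\kappa\sqrt{k}\tau^3 + C'\kappa\epsilon$ stays below $\epsilon/\sqrt{2}$ (translating the ratio bound into the norm bound $\|\hat u_i-u_i\|^2 = 2(1-c_i)\le\epsilon^2$). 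A secondary subtlety is handling the initialization condition after deflation — ensuring that the perturbation by $\sum_{j<i}-\hat\lambda_j\hat u_j^{\otimes 4}$ does not destroy the gap $|w_0^T u_i|\ge 2|w_0^T u_j|$ during the first iteration — which again reduces to the same $\bO(\epsilon)$-versus-$\bO(\tau)$ comparison.
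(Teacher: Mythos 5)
Your overall plan --- track a ratio of dominant to off-direction correlations, establish a recursive contraction with an additive noise floor, then translate the ratio bound into $\|\hat u_i - u_i\|$ --- has the right shape, and you correctly flag deflation-error accumulation as the central hazard. However, there are three substantive gaps, two of which are fatal as written.

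\textbf{The ratio you track has the wrong noise floor.} You define $r_t = \max_{j\neq i}|w_t^Tu_j|/|w_t^Tu_i|$. Under the tensor map, $\tilde w_{t+1}^Tu_j = c_j^3 + \sum_{\ell\neq j}c_\ell^3(u_\ell^Tu_j)$, and the single cross term $c_i^3(u_i^Tu_j)$ alone contributes an additive $\bO(\tau)$ --- not $\bO(\sqrt{k}\tau^3)$ --- to $r_{t+1}$ after dividing by $\tilde w_{t+1}^Tu_i\approx c_i^3$. This $\bO(\tau)$ floor is irreducible for a per-coordinate ratio. The paper instead tracks the aggregate $r_t = \|P_{u_i^\perp}(w_t)\|/|w_t^Tu_i|$: the cross terms $\{c_i^3(u_i^Tu_j)\}_{j\neq i}$ are then summed in quadrature and controlled by the RIP, which together with the sixth-power inequality (Lemma~\ref{lemma: power6}) delivers the $\bO(\sqrt{k}\tau^3)$ floor. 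Moreover only $\|P_{u_i^\perp}(w_t)\|/|c_i|$ converts directly to the target via $\|w_t - u_i\|^2 = 2(1-c_i)$ and $1-c_i^2 = \|P_{u_i^\perp}(w_t)\|^2$; converting your max-ratio into a bound on $1-c_i^2$ requires another RIP step and loses at least a $\sqrt{k}$ factor, so the final guarantee would be $\bO(k\tau^3)$, not $\bO(\sqrt{k}\tau^3)$.

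\textbf{The contraction is linear, not cubic.} Lemma~\ref{lemma: recursion} proves $r_{t+1}\leq 0.95\, r_t + \bO(\sqrt{k}\tau^3)$. The gap $|c_j|\leq|c_i|/2$ is used only to peel off a uniform factor $\leq c_i^4(0.5+\tau)^4$ from each $(P_{u_i^\perp}(w)^Tu_j)^6$, after which RIP aggregates the remaining quadratic sum; this yields a constant contraction rate, not $r_t^2$. Accordingly the iteration count is $\bO(\log(1/(\sqrt{k}\tau^4)))$, not $\bO(1)$, and you must also verify that the gap condition $|w_{t+1}^Tu_i|\geq 2|w_{t+1}^Tu_j|$ is preserved by the map so the contraction can be reapplied (the paper checks this case-by-case in Lemma~\ref{lemma: recursion}).

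\textbf{You name the deflation-accumulation problem but do not solve it, and a one-phase recursion cannot.} Under only the gap condition, the uniform error bound (Lemma~\ref{lemma: helper_0}) is $\|\bar E_{i-1}\|\leq 100\epsilon\approx 35000\sqrt{k}\tau^3$, so the recursion's fixed point is $\approx 20\times 35000\sqrt{k}\tau^3 \gg \epsilon$ and the induction over $i$ would not close. The paper's fix is a two-phase argument: phase one drives $r_t$ down using the crude error bound until $\|w_{t_0}-u_i\|\leq 3\tau$, whence $|w_t^Tu_j|\leq 4\tau$ for all $j\neq i$ and $t\geq t_0$; this activates Lemma~\ref{lemma: helper_3}, which shrinks the error bound to $0.25\sqrt{k}\tau^3$, so the phase-two fixed point lands below $350\sqrt{k}\tau^3 = \epsilon$ and the induction closes. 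Your single ``refinement'' phase implicitly assumes the deflation error is already $\bO(\sqrt{k}\tau^3)$ before $w_t$ is near $u_i$, which is exactly what needs to be established.

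A minor point: you invoke $\kappa = \lambda_{\max}/\lambda_{\min}$, but Theorem~\ref{thm: TPM} is stated for equal weights ($\lambda_i = 1$); the weighted case is only a stated corollary and is not part of this proof.
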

\begin{corollary}
    A similar result can be obtained when the original tensor $T$ is in the form of $T = \sum_{i=1}^k \lambda_i u_i^{\otimes 4}$, for $\lambda_1\geq\cdots\geq\lambda_k>0$.
\end{corollary}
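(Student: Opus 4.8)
The plan is to repeat the proof of Theorem~\ref{thm: TPM} essentially line by line, carrying the weights $\lambda_j$ through every estimate, and to read off that their only net effect is to replace the tolerance $\bO(\sqrt k\tau^3)$ by $\bO(\kappa\sqrt k\tau^3)$ — provided $\kappa=\lambda_{max}/\lambda_{min}$ is a bounded constant (the hypothesis $\kappa\leq\frac54$ already used in the non-equally weighted landscape analysis is more than enough). Precisely, the claim to prove by induction on $i$ is: if $k\tau\leq c_1$ and the already-extracted pairs satisfy $\norm{\hat u_j-u_j}\leq\epsilon:=c_2\,\kappa\sqrt k\tau^3$ and $|1-\hat\lambda_j/\lambda_j|\leq 5\epsilon$ for $j<i$, and $w_0$ satisfies $|w_0^Tu_i|\geq\tau$ and $|w_0^Tu_i|\geq 2|w_0^Tu_j|$ for $j\neq i$, then $TPM(T_i,w_0,\mathcal T)$ with $T_i=T+\sum_{j<i}(-\hat\lambda_j)\hat u_j^{\otimes4}$ and $\mathcal T=\bO(1)+\bO(\log(1/\sqrt k\tau^4))$ returns $(\hat u_i,\hat\lambda_i)$ with $\norm{\hat u_i-u_i}\leq\epsilon$ and $|1-\hat\lambda_i/\lambda_i|\leq 5\epsilon$.

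First I would write the TPM map on the deflated tensor as $T_i(I,w,w,w)=\sum_{j\geq i}\lambda_j(u_j^Tw)^3u_j+E(w)$ with deflation residual $E(w)=\sum_{j<i}\bigl(\lambda_j(u_j^Tw)^3u_j-\hat\lambda_j(\hat u_j^Tw)^3\hat u_j\bigr)$, and bound, exactly as in Theorem~\ref{thm: TPM} up to the bookkeeping of $\lambda$'s, $\norm{E(w)}\leq\bO\bigl(\lambda_{max}\,\epsilon\sum_{j<i}|u_j^Tw|^3\bigr)$. Then I would track the potential $r_t:=\norm{P_{u_i^\perp}(w_t)}/|u_i^Tw_t|$ together with the gap $|u_i^Tw_t|\geq 2|u_j^Tw_t|$. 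For the gap: after one step the coefficient of $w_{t+1}$ along $u_i$ is, to leading order, $\lambda_i(u_i^Tw_t)^3$ and along $u_j$ it is $\lambda_j(u_j^Tw_t)^3$, so the ratio $\rho:=|u_i^Tw_t|/|u_j^Tw_t|\geq2$ maps to $\geq\rho^3/\kappa$, which exceeds $\rho$ since $\rho^2\geq4>\tfrac54\geq\kappa$; hence the gap strictly widens and $|u_i^Tw_t|\to1$ within $\bO(1)$ steps — this is the one place a bound on $\kappa$ is genuinely needed, the cube nonlinearity driving TPM must beat $\kappa$. For the contraction: decomposing $u_i^TT_i(I,w_t,w_t,w_t)$ and $P_{u_i^\perp}T_i(I,w_t,w_t,w_t)$, every interference term (the $j\neq i$ tail contributions and $E$) is divided by the signal $\lambda_i|u_i^Tw_t|^3$, and each carries a weight ratio at most $\lambda_{max}/\lambda_i\leq\kappa$; this gives $r_{t+1}\leq\rho_0\,r_t+\bO\bigl(\kappa\sqrt k(\tau+r_t)^3\bigr)+\bO\bigl(\kappa\sqrt k\,\epsilon\,\tau^3\bigr)$ with $\rho_0<1$ absolute (once $k\tau\leq c_1$), whose fixed point is $\bO(\kappa\sqrt k\tau^3)$. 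Converting as in the proof of Theorem~\ref{theorem: geometric_analysis} ($\norm{w-u_i}^2=2(1-|u_i^Tw|)\lesssim r^2$) yields $\norm{\hat u_i-u_i}\leq\bO(\kappa\sqrt k\tau^3)\leq\epsilon$ for $c_2$ large enough, and evaluating $\hat\lambda_i=T_i(\hat u_i,\hat u_i,\hat u_i,\hat u_i)=\lambda_i(u_i^T\hat u_i)^4+(\text{small})$ gives $|1-\hat\lambda_i/\lambda_i|\leq 5\epsilon$, closing the induction.

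The hard part will be verifying that $\kappa$ does not compound over the $k$ deflation stages. Naively $E$ has operator norm $\sim\lambda_{max}\epsilon$, so its relative contribution would be $\kappa\epsilon=\bO(\kappa^2\sqrt k\tau^3)$, and iterating this through $k$ stages would blow up. The point is that $E$ enters the TPM iteration only through $\sum_{j<i}|u_j^Tw_t|^3$: once $w_t$ has aligned with $u_i$, incoherence forces $|u_j^Tw_t|\lesssim\tau$ for every $j\neq i$, so $\norm{E(w_t)}\lesssim\sqrt k\,\lambda_{max}\epsilon\,\tau^3$ and its relative contribution is $\bO(\kappa\sqrt k\,\epsilon\,\tau^3)=\bO(\kappa^2 k\tau^6)$, negligible against the tail term $\bO(\kappa\sqrt k\tau^3)$. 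Thus the steady-state error is governed by the surviving tail $\sum_{j>i}\lambda_j(u_j^Tw)^3u_j$, which contributes a clean single factor $\lambda_{max}/\lambda_i\leq\kappa$, so the per-stage bound stays at $\bO(\kappa\sqrt k\tau^3)$ for all $i$; during the first $\bO(1)$ warm-up iterations $E$ is not yet negligible, but there one only needs the coarse $\bO(\kappa)$ control to keep $u_i$'s overlap growing, exactly as in the unit-weight proof. If one does not care to match the constants of Theorem~\ref{thm: TPM} exactly, the same argument works for any $\kappa$ bounded safely below $4$, with the tolerance degrading linearly in $\kappa$.
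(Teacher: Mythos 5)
The paper never actually writes out a proof of this corollary---it is asserted after Theorem~\ref{thm: TPM}, and only the landscape results are re-derived with weights in Appendix~\ref{Extension}---so your plan of re-running the proof of Theorem~\ref{thm: TPM} with the $\lambda_j$'s carried through Lemmas~\ref{lemma: helper_1}--\ref{lemma: helper_3} and~\ref{lemma: recursion} is exactly the intended route, and your accounting of where the weights bite (relative interference and the deflation residual, with the key observation that the residual enters only through $\sum_{j<i}|u_j^Tw|^3$ and therefore does not compound $\kappa$ across the $k$ stages) is sound at the level of rigor the paper itself operates at. The one substantive divergence: you import a uniform bound on $\kappa$ ($\kappa\le\frac54$, or ``safely below $4$'' for the gap argument $\rho\mapsto\rho^3/\kappa$), whereas the corollary assumes no such bound---only the ordering $\lambda_1\ge\cdots\ge\lambda_k>0$. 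That ordering is precisely what the weighted TPM argument should exploit: at stage $i$ every component not yet deflated has $\lambda_j\le\lambda_i$, so the tail interference in the recursion of Lemma~\ref{lemma: recursion} carries a ratio $\lambda_j/\lambda_i\le 1$ and the contraction constant is unchanged from the unweighted case, with no restriction on $\kappa$ at all; the already-deflated components $j<i$ have $\lambda_j\ge\lambda_i$, but they appear only through the $\bO(\epsilon)$ residual, which is the single place the factor $\lambda_{max}/\lambda_i\le\kappa$ enters, producing the noise floor $\bO(\kappa\sqrt{k}\tau^3)$ claimed in the abstract. So as written your argument proves the corollary under an extra hypothesis that the statement does not make; replacing your uniform-$\kappa$ estimates for the $j>i$ terms by the ordering removes it, and your treatment of the residual and of the final estimates for $\hat\lambda_i$ then goes through unchanged. (A minor looseness: the parenthetical claim that the gap dynamics force $|u_i^Tw_t|\to 1$ in $\bO(1)$ steps is too strong when $|u_i^Tw_0|\approx\tau$; what you actually state and use is the geometric contraction of $r_t$, which requires the $\bO(\log(1/\sqrt{k}\tau^4))$ iterations of the theorem, so nothing breaks.)
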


\subsection{Obtaining good initialization by random restarts}\label{sec: TPM_init}
In Section \ref{sec: TPM} we proved that TPM is effective in finding the components of $T$ when applied on deflated tensors sequentially and with good enough initialization. In this section we prove that we can obtain such a good initialization by doing multiple random restart in each iteration. Algorithm \ref{alg: TPMR} describes Tensor Power Method augmented by multiple random \textit{restarts} at each iteration (TPMR). 
\begin{algorithm}
\caption{$TPMR(T, \mathcal{T}, L, k)$}\label{alg: TPMR}
\begin{algorithmic}
    \FOR{$i=1,\cdots,k$}
        \IF{i=1}
            \STATE $T_i=T$
        \ELSE
            \STATE $T = T_{i-1} - \hat{\lambda}_{i-1} \hat{u}_{i-1}^{\otimes 4}$
        \ENDIF
        \STATE Generate $v_1,\cdots, v_L$ uniformly on $\mathcal{M}$
        \STATE Find $(w_{\ell},\lambda_{\ell}) = TPM(T, v_{\ell}, \mathcal{T})$, $\ell=1,\cdots, L$
        \STATE Find $\ell^* = \arg\max_{\ell}\lambda_{\ell}$
        \STATE Set $\hat{u}_i = w_{\ell^*}$ \& $\hat{\lambda}_i = \lambda_{\ell^*}$
    \ENDFOR
    \RETURN $(\hat{u}_1,\cdots, \hat{u}_k)$
\end{algorithmic}
\end{algorithm}

TPMR calls TPM with multiple restarts on its inside loop. Thus, to prove the effectiveness of TMPR algorithm, it suffices to find good initialization points for TPM using random restarts. The following theorem states such a result.

\begin{theorem}\label{thm: TPM_init}
    For any small threshold $\eta>0$, if we sample $L$ uniform vectors from $\mathcal{M}$ such that $L$ satisfies
    \begin{align}
        &A_1(L, \eta)= 0.5 \sqrt{\log~L} - \sqrt{2\log \frac{12}{\eta}} \geq 2\times B_1(L, \eta, \tau),\label{eq: L_cond_1}\\
        &\frac{A_1(L,\eta)}{C_1(\eta, d)} \geq \tau\label{eq: L_cond_2}
    \end{align}
    where 
    \begin{align}
        &B_1(L, \eta, \tau)= \sqrt{2(1+\tau^2)\log(2k)} 
        +\tau \Big(\sqrt{2\log(2L)} + \sqrt{2\log\frac{12}{\eta}}\Big)
        + \sqrt{2(1+\tau^2)\log\frac{3}{\eta}},\nonumber\\
        &C_1(\eta, d) = \sqrt{3\log(3/\eta)~d + 2\log(3/\eta)}
    \end{align}
    then, with probability $1-\eta$, at least one of the samples $v_\ell\in\{v_1, \cdots, v_L\}$ will satisfy:
    \begin{align}
        &|v_\ell^Tu_1|\geq 2 |v_\ell^T u_i|, \forall i\neq 1\label{eq: init_1}\\
        &|v_\ell^T u_1|\geq \tau\label{eq: init_2}
    \end{align}
\end{theorem}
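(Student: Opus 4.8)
The plan is to pass to the Gaussian representation of the uniform measure on $\mathcal{M}$: write $v_\ell = g_\ell/\norm{g_\ell}$ with $g_1,\dots,g_L$ i.i.d.\ $N(0,I_d)$, and let $\ell^\ast$ be the index maximizing $|g_\ell^T u_1|$. It then suffices to show that with probability at least $1-\eta$ this single index satisfies \eqref{eq: init_1} and \eqref{eq: init_2}. The observation that makes this clean is that $\ell^\ast$ is a function of $(g_1^T u_1,\dots,g_L^T u_1)$ only, whereas the quantities we must bound --- the ``noise'' parts $W_\ell^{(i)} := g_\ell^T u_i - (u_1^T u_i)\,g_\ell^T u_1$ for $i\neq 1$ and the orthogonal norms $\norm{\Pperp g_\ell}$ --- are, for each fixed $\ell$, independent of $g_\ell^T u_1$ (orthogonal linear functionals of a Gaussian). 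Consequently, on the event $\{\ell^\ast = \ell_0\}$ the conditional laws of $W_{\ell^\ast}^{(i)}$ and $\norm{\Pperp g_{\ell^\ast}}^2$ are still $N(0,1-(u_1^Tu_i)^2)$ and $\chi^2_{d-1}$ respectively, so no union bound over $\ell$ is needed for these terms.

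The argument then rests on three estimates, each allocated a constant fraction of $\eta$. \emph{Signal lower bound:} the map $(g_1,\dots,g_L)\mapsto \max_\ell |g_\ell^T u_1|$ is $1$-Lipschitz on $\mathbb{R}^{Ld}$, so Gaussian concentration places it within $\sqrt{2\log(12/\eta)}$ of its mean with probability $1-\eta/12$; combined with an explicit lower bound $\mathbb{E}[\max_\ell |g_\ell^T u_1|] \ge 0.5\sqrt{\log L}$ and, in the other direction, $\mathbb{E}[\max_\ell |g_\ell^T u_1|]\le\sqrt{2\log(2L)}$, this yields $A_1(L,\eta)\le |g_{\ell^\ast}^T u_1| \le \sqrt{2\log(2L)}+\sqrt{2\log(12/\eta)}$. \emph{Cross-correlation upper bound:} for $i\neq 1$, incoherence gives $|g_{\ell^\ast}^T u_i|\le \tau\,|g_{\ell^\ast}^T u_1| + |W_{\ell^\ast}^{(i)}|$, and a Gaussian tail bound on $|W_{\ell^\ast}^{(i)}|$ (variance $\le 1+\tau^2$) with a union bound over the at most $k$ indices $i\neq 1$ gives $\max_{i\neq1}|W_{\ell^\ast}^{(i)}|\le \sqrt{2(1+\tau^2)\log(2k)}+\sqrt{2(1+\tau^2)\log(3/\eta)}$ with probability $1-\eta/3$; together with the previous bound this shows $\max_{i\neq 1}|g_{\ell^\ast}^T u_i|\le B_1(L,\eta,\tau)$. \emph{Norm control:} decomposing $\norm{g_{\ell^\ast}}^2 = (g_{\ell^\ast}^T u_1)^2 + \norm{\Pperp g_{\ell^\ast}}^2$, bounding the first term by the signal upper bound and the second by a standard $\chi^2_{d-1}$ upper tail with probability $1-\eta/3$, yields $\norm{g_{\ell^\ast}}\le C_1(\eta,d)$.

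On the intersection of these events, \eqref{eq: init_1} follows from hypothesis \eqref{eq: L_cond_1}: $\max_{i\neq1}|g_{\ell^\ast}^T u_i|\le B_1 \le \tfrac12 A_1 \le \tfrac12 |g_{\ell^\ast}^T u_1|$, and normalizing by $\norm{g_{\ell^\ast}}$ preserves this ratio; and \eqref{eq: init_2} follows from hypothesis \eqref{eq: L_cond_2}: $|v_{\ell^\ast}^T u_1| = |g_{\ell^\ast}^T u_1|/\norm{g_{\ell^\ast}} \ge A_1/C_1 \ge \tau$. A union bound over the failure events (total mass $\le \eta$) completes the proof. The step I expect to require the most care is the explicit lower bound $\mathbb{E}[\max_\ell |g_\ell^T u_1|]\ge 0.5\sqrt{\log L}$ valid for \emph{every} $L\ge 1$ --- the asymptotic $\sqrt{2\log L}$ is not directly usable and crude comparison arguments can easily lose the constant $0.5$ that is baked into $A_1$ --- together with arranging the other constants so that \eqref{eq: L_cond_1} and \eqref{eq: L_cond_2} come out as exactly the inequalities one needs; the remaining concentration estimates are routine.
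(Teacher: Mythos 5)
Your proposal follows essentially the same route as the paper's proof: pass to the Gaussian representation $v_\ell = g_\ell/\|g_\ell\|$, let $\ell^\ast$ maximize $|g_\ell^T u_1|$, exploit that $\ell^\ast$ is measurable with respect to $\{g_\ell^T u_1\}_\ell$ alone so the cross-correlations and orthogonal-complement norms at $\ell^\ast$ retain their unconditional laws, and then combine Gaussian extreme-value concentration for the signal, Gaussian tails for the $k-1$ cross-terms, and a $\chi^2$ tail for the norm. The one cosmetic difference is in the norm control: you decompose $\|g_{\ell^\ast}\|^2 = (g_{\ell^\ast}^T u_1)^2 + \|P_{u_1^\perp}g_{\ell^\ast}\|^2$ and bound the $\chi^2_{d-1}$ piece directly, whereas the paper's event $\Xi_{3,\ell}$ controls $\|P_{\{u_1,\dots,u_k\}^\perp}g_{\ell}\|^2$ (independent of all the $Z_{\ell,i}$) and then implicitly recombines with the in-span coordinates; your version is a touch cleaner but both are the same idea.
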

\begin{corollary}
    To make sure that the good initialization condition is satisfied throughout the for loop in TPMR Algorithm \ref{alg: TPMR} with probability $\eta_0$, we need to plug in $\eta = \eta_0/k$ in Theorem \ref{thm: TPM_init}.
\end{corollary}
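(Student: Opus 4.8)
The plan is to invoke Theorem~\ref{thm: TPM_init} once per deflation round and to control the total failure probability by a union bound over the $k$ rounds. The key observation is that the TPMR algorithm executes the inner block (generate $v_1,\dots,v_L$, run TPM from each, keep the best) exactly $k$ times, once for each value of the outer index $i=1,\dots,k$. At round $i$, the ``good initialization'' event that Theorem~\ref{thm: TPM} requires is precisely the pair of conditions \eqref{eq: init_1}--\eqref{eq: init_2} with the role of ``$u_1$'' played by the not-yet-recovered component that has the largest correlation with the sampled direction; relabelling the components so that this one is called $u_1$ (which is exactly the without-loss-of-generality convention already used in Section~\ref{geometric_analysis}), Theorem~\ref{thm: TPM_init} guarantees that, for a single round, the probability that none of the $L$ samples satisfies \eqref{eq: init_1}--\eqref{eq: init_2} is at most $\eta$, provided $L$ is chosen so that \eqref{eq: L_cond_1}--\eqref{eq: L_cond_2} hold. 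Note that $B_1$, $C_1$, and hence the constraints on $L$ depend only on $k$, $d$, $\tau$, and the target probability, not on which round we are in, so a single choice of $L$ works for all rounds simultaneously.

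First I would set, for each round $i\in\{1,\dots,k\}$, the event $E_i$ = ``at least one of the $L$ fresh samples drawn in round $i$ satisfies \eqref{eq: init_1}--\eqref{eq: init_2} for the deflated tensor currently being processed.'' Applying Theorem~\ref{thm: TPM_init} with the per-round threshold $\eta := \eta_0/k$ gives $\Pr[E_i^c]\le \eta_0/k$ for each $i$, as long as $L$ is large enough that \eqref{eq: L_cond_1}--\eqref{eq: L_cond_2} hold with this value of $\eta$. Since the samples $v_1,\dots,v_L$ drawn at different rounds are independent (fresh uniform draws on $\mathcal{M}$ at each iteration of the outer loop), and in any case a union bound does not require independence, we get
\begin{align}
\Pr\Big[\bigcap_{i=1}^k E_i\Big] \;=\; 1-\Pr\Big[\bigcup_{i=1}^k E_i^c\Big] \;\ge\; 1-\sum_{i=1}^k \Pr[E_i^c] \;\ge\; 1-k\cdot\frac{\eta_0}{k} \;=\; 1-\eta_0.
\end{align}
On the event $\bigcap_{i=1}^k E_i$, at every round the best-of-$L$ selection rule in TPMR picks an initializer that meets the hypothesis of Theorem~\ref{thm: TPM}, so the recovery guarantee $\|\hat u_i - u_i\|\le\epsilon$, $|1-\hat\lambda_i|\le 5\epsilon$ propagates inductively through all $k$ deflation steps; in particular the accumulated deflation error always stays within the tolerance that Theorem~\ref{thm: TPM} was proved to absorb. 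Thus with probability at least $1-\eta_0$ the algorithm returns $\hat u_1,\dots,\hat u_k$ with all components recovered to accuracy $\bO(\sqrt{k}\tau^3)$, which establishes the corollary once we substitute $\eta=\eta_0/k$ into the statement of Theorem~\ref{thm: TPM_init}.

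The only genuinely delicate point — and the one I would spend the most care on — is making sure the per-round ``good initialization'' event really does feed correctly into the inductive hypothesis of Theorem~\ref{thm: TPM}. Two things must be checked: (i) that the selection $\ell^\ast=\arg\max_\ell \lambda_\ell$ actually returns one of the samples satisfying \eqref{eq: init_1}--\eqref{eq: init_2} rather than some other sample with spuriously large $\lambda_\ell$ — this follows because Theorem~\ref{thm: TPM} shows that running TPM from a good initializer yields $\lambda\approx 1$, while running it from a bad initializer cannot yield a strictly larger value, using the same landscape bounds from Section~\ref{geometric_analysis}; and (ii) that at round $i$ the relevant component being targeted is one of the $k-(i-1)$ components not yet deflated, so that after $k$ rounds each component has been extracted exactly once, with no repeats — this is handled by the gap condition $|v_\ell^Tu_1|\ge 2|v_\ell^Tu_j|$ together with the fact that the already-recovered directions have been subtracted off to within error $\epsilon$, so their residual correlation with $w_0$ is negligible compared with $\tau$. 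Everything else is the routine union bound displayed above.
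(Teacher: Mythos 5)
Your union bound over the $k$ rounds of the outer loop, with per-round failure probability $\eta=\eta_0/k$ supplied by Theorem~\ref{thm: TPM_init}, is exactly the argument the paper intends for this corollary (which it states without an explicit proof), so your proposal is correct and takes essentially the same route. The additional discussion of the $\arg\max_\ell \lambda_\ell$ selection and the propagation through Theorem~\ref{thm: TPM} goes beyond what the corollary itself asserts (it only claims the good-initialization condition holds throughout the loop), so it is welcome context but not needed for, and not fully substantiated by, the statement being proved.
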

\begin{corollary}
    It is easy to verify that one can find $L = Poly(exp(d\tau^2), k)$ that satisfies \eqref{eq: L_cond_1} and \eqref{eq: L_cond_2}.
\end{corollary}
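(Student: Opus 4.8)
The plan is to introduce $m:=\sqrt{\log L}$ (so $L=e^{m^2}$) and reduce both \eqref{eq: L_cond_1} and \eqref{eq: L_cond_2} to elementary one‑variable inequalities in $m$. Throughout I treat $\eta$ as a fixed absolute constant (the one appearing in Theorem~\ref{thm: TPM_init}) and use the paper's standing incoherence hypothesis, which in particular forces $\tau$ below a small absolute constant, say $\tau\le 0.05$ (since $k\tau\le 0.05$ and $k\ge 1$). The only place $L$ enters $A_1,B_1,C_1$ is through $\sqrt{\log(2L)}=\sqrt{\log 2+m^2}$, which I would bound crudely by $\sqrt{\log 2}+m$ using $\sqrt{a+b}\le\sqrt a+\sqrt b$. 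After this, $A_1$ is affine in $m$ with slope $\tfrac12$, $B_1$ is dominated by an affine function of $m$ with slope $\sqrt2\,\tau$, and $C_1(\eta,d)$ does not depend on $m$ at all.

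First I would handle \eqref{eq: L_cond_1}. Writing $A_1\ge 2B_1$ and collecting all $m$‑independent terms into one quantity $\gamma_1=\gamma_1(k,\eta,\tau)=\bO\!\big(\sqrt{\log k}+\sqrt{\log(1/\eta)}\big)$ (using $\tau\le 0.05$, so the $(1+\tau^2)$ and $\tau$ prefactors are bounded by constants), the condition is implied by $(\tfrac12-2\sqrt2\,\tau)\,m\ge\gamma_1$. Since $\tau\le 0.05$ we have $\tfrac12-2\sqrt2\,\tau\ge 0.3>0$, so it suffices that $m\ge\gamma_1/0.3$; with $\eta$ fixed this reads $\log L=\Omega(\log k)$, i.e. $L\ge k^{p}$ for some constant $p=p(\eta)$. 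Because the reduced inequality has strictly positive slope in $m$, once it holds at some threshold it continues to hold for all larger $m$.

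Next I would handle \eqref{eq: L_cond_2}. Using $C_1(\eta,d)\le\sqrt{3\log(3/\eta)}\,\sqrt d+\sqrt{2\log(3/\eta)}$, the inequality $A_1\ge\tau C_1$ is implied by $\tfrac12 m\ge\sqrt{2\log(12/\eta)}+\tau\sqrt{2\log(3/\eta)}+\tau\sqrt{3\log(3/\eta)}\,\sqrt d$, hence by $m\ge a+b\,\tau\sqrt d$ for constants $a,b$ depending only on $\eta$ (the small $\tau\sqrt{2\log(3/\eta)}$ term being absorbed into $a$). Squaring via $(x+y)^2\le 2x^2+2y^2$ gives $\log L=m^2\le 2a^2+2b^2\tau^2 d$, so it suffices that $L\ge e^{2a^2}\big(e^{d\tau^2}\big)^{2b^2}$; again $A_1$ is increasing in $L$ while $\tau C_1$ is constant in $L$, so this bound, once met, persists.

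Finally, taking $L$ at least as large as the product of the two thresholds — concretely $L=k^{p}\,e^{2a^2}\big(e^{d\tau^2}\big)^{2b^2}$ with $p=p(\eta)$, $a=a(\eta)$, $b=b(\eta)$ — makes both \eqref{eq: L_cond_1} and \eqref{eq: L_cond_2} hold simultaneously, and such $L$ is manifestly a polynomial in the two quantities $\exp(d\tau^2)$ and $k$, i.e. $L=\mathrm{Poly}(\exp(d\tau^2),k)$ as claimed. The only genuine obstacle is the slope comparison in \eqref{eq: L_cond_1}: the quantity $A_1$ we are trying to make large and the quantity $2B_1$ it must dominate both scale like $\Theta(\sqrt{\log L})$, so the argument closes only because the leading coefficients obey $\tfrac12>2\sqrt2\,\tau$, which is exactly where the incoherence assumption (smallness of $\tau$) is used; everything else is routine bookkeeping of constants together with the bound $\sqrt{a+b}\le\sqrt a+\sqrt b$.
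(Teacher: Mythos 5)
Your verification is correct and is exactly the routine bookkeeping the paper leaves implicit (the paper states this corollary without proof): substitute $m=\sqrt{\log L}$, use $\sqrt{a+b}\le\sqrt a+\sqrt b$ so that $A_1$ grows with slope $\tfrac12$ while $2B_1$ grows with slope $2\sqrt2\,\tau<\tfrac12$ (here the smallness of $\tau$ from $k\tau\le 0.05$ is indeed the only essential ingredient), and note $C_1$ is independent of $L$, which yields thresholds $L\ge\mathrm{Poly}(k)$ and $L\ge\mathrm{Poly}(\exp(d\tau^2))$ respectively. One cosmetic slip: in the second part the displayed relation should read ``it suffices that $\log L=m^2\ \ge\ 2a^2+2b^2\tau^2 d$'' (the inequality direction as written is reversed), but your final conclusion $L\ge e^{2a^2}\bigl(e^{d\tau^2}\bigr)^{2b^2}$ is the correct one, so the argument stands.
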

\begin{remark}
    Note that in the case where $u_i$'s are randomly generated $d$ dimensional vectors, then $\tau\leq \bO(1/\sqrt{d})$. Thus, $\exp(d\tau^2) = \bO(d)$. Thus, $L = Poly(d, k)$.
\end{remark}

\pagebreak
\bibliographystyle{abbrvnat}
\bibliography{references}

\pagebreak
\appendix
\section{Helper Lemmas for Proving TPM Convergence}

\begin{lemma}\label{lemma: helper_1}
    If $\|u_i-\hat{u}_i\|\leq \epsilon$ and $|1-\hat{\lambda}_i|\leq 5\epsilon\leq 1$, then for any $w$ such that $\|w\|=1$, 
    \begin{align}
        E_i(I, w, w, w) = (w^Tu_i)^3 u_i- \hat{\lambda}_i(w^T\hat{u}_i)^3 \hat{u}_i = A_i u_i - B_i \hat{u}_i^{\perp}/\|\hat{u}_i^{\perp}\|,
    \end{align}
    where $\hat{u}_i^{\perp} = \hat{u}_i- (u_i^T\hat{u}_i)u_i$ and 
    \begin{align}
        |A_i|\leq 7 |c_i|^3\epsilon + 10|c_i|\epsilon^2 + 2\epsilon^2~~\&~~ |B_i|\leq 8 |c_i|^3\epsilon + 8\epsilon^4,
    \end{align}
    where $c_i = w^Tu_i$.
\end{lemma}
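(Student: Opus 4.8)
The plan is to prove Lemma~\ref{lemma: helper_1} by a direct perturbation computation, expanding the ``error term'' $E_i(I,w,w,w) = (w^Tu_i)^3 u_i - \hat{\lambda}_i(w^T\hat{u}_i)^3\hat{u}_i$ and isolating the component along $u_i$ and the component along the orthogonal direction $\hat{u}_i^\perp/\|\hat{u}_i^\perp\|$. First I would set up the geometry of the two-dimensional span of $u_i$ and $\hat{u}_i$: write $\hat{u}_i = (u_i^T\hat{u}_i)u_i + \hat{u}_i^\perp$, and record the elementary estimates that follow from $\|u_i - \hat{u}_i\|\le \epsilon$: namely $1 - \epsilon^2/2 \le u_i^T\hat{u}_i \le 1$, $\|\hat{u}_i^\perp\| \le \epsilon$ (in fact $\|\hat{u}_i^\perp\|^2 = 1 - (u_i^T\hat u_i)^2 \le \epsilon^2$), and consequently a bound on $|w^T\hat{u}_i - w^Tu_i| = |w^T(\hat u_i - u_i)| \le \epsilon$ for any unit $w$. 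These, together with $|1-\hat\lambda_i|\le 5\epsilon \le 1$ (so $\hat\lambda_i \le 2$), are the only inputs.

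Next I would decompose $E_i(I,w,w,w)$ along the orthonormal-ish frame. Writing $\hat c_i := w^T\hat u_i$ and $c_i := w^Tu_i$, we have
\begin{align}
E_i(I,w,w,w) = c_i^3 u_i - \hat\lambda_i \hat c_i^3\big((u_i^T\hat u_i)u_i + \hat u_i^\perp\big) = \big(c_i^3 - \hat\lambda_i\hat c_i^3 (u_i^T\hat u_i)\big)u_i - \hat\lambda_i \hat c_i^3 \hat u_i^\perp, \nonumber
\end{align}
so that $A_i = c_i^3 - \hat\lambda_i \hat c_i^3(u_i^T\hat u_i)$ and $B_i = \hat\lambda_i \hat c_i^3 \|\hat u_i^\perp\|$. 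The bound on $B_i$ is then the easier one: $|B_i| \le \hat\lambda_i |\hat c_i|^3 \epsilon$, and I would replace $|\hat c_i|^3$ by $(|c_i| + \epsilon)^3 \le |c_i|^3 + 3|c_i|^2\epsilon + 3|c_i|\epsilon^2 + \epsilon^3$ and $\hat\lambda_i \le 1 + 5\epsilon$; collecting terms and using $|c_i|\le 1$, $\epsilon\le 1$ generously should land inside $8|c_i|^3\epsilon + 8\epsilon^4$ — though I would double-check whether the cross terms like $|c_i|^2\epsilon^2$ really absorb into $8|c_i|^3\epsilon$ or whether one needs $|c_i|\le$ something; since $|c_i|^2\epsilon^2 \le |c_i|^2\epsilon \cdot \epsilon$ and we can bound $\epsilon$ by a constant, this is fine up to adjusting the constant inside the $\bO$-free statement, so the stated ``$8$'' may need the absolute constant $c_2$ (hence $\epsilon$) to be small enough. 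For $A_i$, I would write $A_i = c_i^3(1 - u_i^T\hat u_i) + (u_i^T\hat u_i)(c_i^3 - \hat\lambda_i \hat c_i^3)$ and further split $c_i^3 - \hat\lambda_i\hat c_i^3 = (1-\hat\lambda_i)c_i^3 + \hat\lambda_i(c_i^3 - \hat c_i^3)$, then use $|c_i^3 - \hat c_i^3| \le |c_i - \hat c_i|(c_i^2 + |c_i\hat c_i| + \hat c_i^2) \le \epsilon(3c_i^2 + 3|c_i|\epsilon + \epsilon^2)$ and $1 - u_i^T\hat u_i \le \epsilon^2/2$. Summing $|c_i|^3\cdot\epsilon^2/2 + 5\epsilon|c_i|^3 + 2\cdot 3|c_i|^2\epsilon^2 + \dots$ and again using $|c_i|\le 1$, $\epsilon \le$ const, the dominant terms are $7|c_i|^3\epsilon$, $10|c_i|\epsilon^2$, and $2\epsilon^2$, matching the claim.

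The main obstacle is purely bookkeeping: making the explicit numerical constants ($7$, $10$, $2$, $8$, $8$) come out right after collecting a dozen cross terms, which forces one to use the hypotheses $\epsilon = c_2\sqrt k\tau^3$ and $k\tau \le c_1$ to declare $\epsilon$ smaller than a suitable absolute constant so that higher-order terms like $\epsilon^3$, $|c_i|^2\epsilon^2$ get absorbed into the listed terms with room to spare. There is no conceptual difficulty — no use of optimality conditions or RIP is needed here, only $\|u_i-\hat u_i\|\le\epsilon$, $|1-\hat\lambda_i|\le 5\epsilon\le 1$, and $\|w\|=1$ — so I would present the frame decomposition and the two triangle-inequality chains for $A_i$ and $B_i$ cleanly, then remark that choosing the absolute constants in the theorem small enough validates the displayed coefficients.
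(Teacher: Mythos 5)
Your frame decomposition is exactly the paper's: both identify $A_i = c_i^3 - \hat\lambda_i a_i \hat c_i^3$ and $B_i = \hat\lambda_i\|\hat u_i^\perp\|\,\hat c_i^3$ where $a_i=u_i^T\hat u_i$, $\hat c_i=w^T\hat u_i$, and both use only $1-a_i\le\epsilon^2/2$, $\|\hat u_i^\perp\|\le\epsilon$, $|\hat c_i-c_i|\le\epsilon$, $\hat\lambda_i\le 2$. Your bound on $B_i$ via $\hat\lambda_i(|c_i|+\epsilon)^3\epsilon$ is the same as the paper's. For $A_i$ you split differently (peeling off $1-a_i$ and $1-\hat\lambda_i$ rather than binomially expanding $\hat c_i^3$), but the resulting terms are identical.

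There is a small but consequential bookkeeping slip in your chain. From $|c_i^3-\hat c_i^3|\le\epsilon\,(3c_i^2+3|c_i|\epsilon+\epsilon^2)$ the leading contribution to $|A_i|$ is $6|c_i|^2\epsilon$, first power of $\epsilon$, not the ``$6|c_i|^2\epsilon^2$'' you write. That term cannot be absorbed into $7|c_i|^3\epsilon$ for $|c_i|<1$; the best one can do with $|c_i|\le1$ is fold it into an $O(|c_i|^2\epsilon)$ term. In fact the paper's own proof arrives at $11|c_i|^2\epsilon+10|c_i|\epsilon^2+2\epsilon^3$, which does not match the stated $7|c_i|^3\epsilon+\cdots$; the downstream Lemma~\ref{lemma: helper_0}, whose bound carries a $67\,\epsilon\sqrt{\sum_j c_j^4}$ term (i.e.\ $|c_j|^2$ per summand, not $|c_j|^3$), confirms that the intended bound is of the form $C|c_i|^2\epsilon+\cdots$ and that the lemma statement has a typo. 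So your suspicion that the constants need to be ``double-checked'' is well placed, but the issue is not that cross terms like $|c_i|^2\epsilon^2$ need absorbing (they do absorb under $|c_i|\le 1$); it is that the genuinely leading $|c_i|^2\epsilon$ term has lower order in $|c_i|$ than what the lemma statement advertises, and no choice of small absolute constant $\epsilon$ fixes that. If you present this proof, write the bound with $|c_i|^2\epsilon$ and note the correction to the statement.
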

\begin{proof}
    Let us use the following definitions: $c_i = w^Tu_i$, $a_i = u_i^T \hat{u}_i$ and $b_i = w^T(\huip/\|\huip\|)$. Then, we have
    \begin{align}
        E_i(I, w, w, w) &= c_i^3 u_i- \hat{\lambda}_i(a_i c_i + \|\huip\| b_i)^3(a_i u_i + \huip)\nonumber\\
        &= \underbrace{c_i^3 - \hat{\lambda}_i  a_i (a_i c_i + \|\huip\| b_i)^3}_{A_i} u_i - \underbrace{\hat{\lambda}_i \|\huip\|(a_i c_i + \|\huip\| b_i)^3}_{B_i} \frac{\huip}{\|\huip\|}
    \end{align}
    Also, note that $1-a_i = \frac{\|u_i-\hat{u}_i\|^2}{2}\leq \epsilon^2/2$. In addition, $\|\huip\|^2 \leq \|u_i-\hat{u}_i\|^2\leq \epsilon^2$ and $|a_i|, |b_i|\leq 1$. Note that we can easily prove that $1-a_i^4 \leq 2\epsilon^2$.
    Now for $A_i$ we have
    \begin{align}
        |A_i| &= |c_i^3 (1-\hat{\lambda}a_i^4) - \hat{\lambda}_i a_i (\|\huip\|^3 b_i^3 + 3 a_i c_i \|\huip\|^2 b_i^2 + 3 a_i^2 c_i^2 \|\huip\|b_i)|\nonumber\\
        &\leq |c_i|^3 (|1-\hat{\lambda}_i| + |\hat{\lambda}_i||1-a_i^4|) + \underbrace{(1+5\epsilon)}_{\leq 2}(\epsilon^3 + 3 \epsilon^2 |c_i| + 3 \epsilon c_i^2)\nonumber\\
        &\leq |c_i|^3(5\epsilon + 4\epsilon^2) + 6 c_i^2\epsilon + 6 |c_i| \epsilon^2 + 2 \epsilon^3\nonumber\\
        &\leq 11 |c_i|^2\epsilon + 10|c_i|\epsilon^2 + 2\epsilon^3,
    \end{align}
    where the last step follows from $|c_i|\leq 1$.
    
    For bounding $B_i$ we use the fact that for $\alpha,\beta\geq 0$ $(\alpha+\beta)^3 \leq 4 \alpha^3 + 4\beta^3$.
    \begin{align}
        |B_i| \leq (1+5\epsilon)\epsilon (|c_i| + \epsilon)^3 \leq 8\epsilon (|c_i|^3+\epsilon^3)
    \end{align}
\end{proof}

\begin{lemma}\label{lemma: helper_0}
    If $\|u_i-\hat{u}_i\|\leq \epsilon$ and $|1-\hat{\lambda}_i|\leq 5\epsilon\leq 1$ for any $i$, then for any $w$ such that $\|w\|=1$, we have: 
    \begin{align}
        \Big\|\bar{E}_i(I, w, w, w)\Big\|^2=\Big\|\sum_{j=1}^i E_j(I, w, w, w)\Big\|^2 \leq 2(1+\delta)\sum_{j=1}^i A_j^2 + 2\Big(\sum_{j=1}^i |B_j|\Big)^2.
    \end{align}
    Thus, we can conclude that
    \begin{align}
        \Big\|\bar{E}_i(I, w, w, w)\Big\|\leq \Big(67 \epsilon\sqrt{\sum_{j=1}^i c_j^4} + 40 \epsilon^2\sqrt{\sum_{j=1}^i c_j^2}  + 16 \epsilon \sum_{j=1}^i |c_j|^3+  8 \sqrt{k}\epsilon^3 + 16 {k}\epsilon^4 \Big), \label{eq: error_bound_1}
    \end{align}
    where if we further have $\epsilon$ small enough, i.e. $\epsilon\leq 350\sqrt{k}\tau^3$ and $\delta \leq k\tau \leq 0.01$, we will have
    \begin{align}
        \Big\|\bar{E}_i(I, w, w, w)\Big\|\leq 100 \epsilon.
    \end{align}
\end{lemma}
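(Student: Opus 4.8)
The plan is to begin from the coordinate-wise decomposition supplied by Lemma~\ref{lemma: helper_1}, namely $E_j(I,w,w,w) = A_j u_j - B_j \hat{u}_j^\perp/\|\hat{u}_j^\perp\|$, to sum it over $j = 1,\dots,i$, and to split the result into a component lying in $\mathrm{span}\{u_1,\dots,u_i\}$ and a leftover component:
\begin{align}
    \bar{E}_i(I,w,w,w) = \sum_{j=1}^i A_j u_j - \sum_{j=1}^i B_j \frac{\hat{u}_j^\perp}{\|\hat{u}_j^\perp\|}. \nonumber
\end{align}
Using $\|x - y\|^2 \leq 2\|x\|^2 + 2\|y\|^2$ it then suffices to bound the two summands separately. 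For the first, write $\sum_{j=1}^i A_j u_j = U_i a$ with $a = (A_1,\dots,A_i)^T$ and $U_i = [u_1,\dots,u_i]$; the RIP upper bound in Assumption~\ref{ass: incoherence}, $\|U^Tw\|^2 \leq (1+\delta)\|w\|^2$, says the top singular value of $U$ is at most $\sqrt{1+\delta}$, hence so is that of the submatrix $U_i$, giving $\|\sum_{j=1}^i A_j u_j\|^2 \leq (1+\delta)\sum_{j=1}^i A_j^2$. For the second, each $\hat{u}_j^\perp/\|\hat{u}_j^\perp\|$ is a unit vector, so the triangle inequality bounds the norm of $\sum_{j=1}^i B_j \hat{u}_j^\perp/\|\hat{u}_j^\perp\|$ by $\sum_{j=1}^i |B_j|$. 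Squaring and combining yields the first displayed inequality of the lemma.

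Next I would substitute the estimates on $|A_j|$ and $|B_j|$ from Lemma~\ref{lemma: helper_1}. Since $|A_j|$ is bounded by a sum of three nonnegative terms (one in $|c_j|^2\epsilon$, one in $|c_j|\epsilon^2$, one in $\epsilon^3$), Minkowski's inequality for the $\ell^2$ norm lets me bound $\sqrt{\sum_j A_j^2}$ by the sum of the three $\ell^2$ norms, producing terms proportional to $\epsilon\sqrt{\sum_j c_j^4}$, $\epsilon^2\sqrt{\sum_j c_j^2}$, and $\sqrt{i}\,\epsilon^3$ with $i \leq k$; similarly $\sum_j |B_j|$ splits into a term proportional to $\epsilon\sum_j |c_j|^3$ and one proportional to $k\epsilon^4$. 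Taking square roots of $2(1+\delta)\sum_j A_j^2$ and $2(\sum_j |B_j|)^2$, using $\sqrt{x+y}\leq\sqrt{x}+\sqrt{y}$ and $\sqrt{2(1+\delta)}\leq \sqrt{2.02}$, and collecting the resulting constants gives exactly the form claimed in~\eqref{eq: error_bound_1}.

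Finally, for the uniform bound $\|\bar{E}_i(I,w,w,w)\| \leq 100\epsilon$ under $\epsilon \leq 350\sqrt{k}\tau^3$ and $\delta \leq k\tau \leq 0.01$, I would control the $c_j$-dependent factors using $\sum_j c_j^2 = \|U^Tw\|^2 \leq 1+\delta$, which holds for every unit vector $w$ once one projects $w$ onto $\mathrm{span}\{u_i\}$ before invoking RIP; since $|c_j|\leq 1$ this also forces $\sum_j c_j^4 \leq 1+\delta$ and $\sum_j |c_j|^3 \leq 1+\delta$. The $\epsilon$-linear terms then contribute roughly $(67\sqrt{1.01} + 16\cdot 1.01)\epsilon$, while every remaining term carries an extra factor of $\epsilon$, $\epsilon^2$, or $\epsilon^3$; because $\sqrt{k}\tau^3$ is extremely small whenever $k\tau \leq 0.05$, those terms are negligible and the total stays under $100\epsilon$.

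The main obstacle is bookkeeping rather than any essential difficulty: one has to shepherd the numerical constants through the $\ell^2$-triangle-inequality step so that the final coefficient of $\epsilon$ provably lands below $100$, keep the various cross terms (for instance $\epsilon$ against $\sqrt{\sum_j c_j^4}$ versus $\epsilon^2$ against $\sqrt{\sum_j c_j^2}$) matched to the correct power of the tiny quantity $\sqrt{k}\tau^3$, and remember throughout that $w$ is an arbitrary unit vector, so every application of RIP must be preceded by a projection onto $\mathrm{span}\{u_i\}$.
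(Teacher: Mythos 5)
Your proof is correct and follows exactly the route the paper gestures at but leaves unwritten: decompose $\bar E_i = \sum_j A_j u_j - \sum_j B_j\hat u_j^\perp/\|\hat u_j^\perp\|$ via Lemma~\ref{lemma: helper_1}, apply the elementary inequality $\|x-y\|^2\le 2\|x\|^2+2\|y\|^2$, bound the $u_j$-part by RIP (top singular value of $U$ at most $\sqrt{1+\delta}$, which transfers to the submatrix $U_i$) and the perpendicular part by the triangle inequality, then run Minkowski and $\sum_j c_j^2\le 1+\delta$ (together with $|c_j|\le 1$ to dominate $\sum c_j^4$ and $\sum|c_j|^3$) to extract the dominant $\Theta(\epsilon)$ terms. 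Two cosmetic quibbles that do not affect validity: you cite the threshold $k\tau\le 0.05$ in the last paragraph, but this lemma assumes the tighter $k\tau\le 0.01$; and describing the $\hat u_j^\perp$-sum as a ``leftover component'' is slightly misleading since those vectors are each perpendicular only to their own $u_j$ and not to $\mathrm{span}\{u_1,\dots,u_i\}$, which is exactly why you (correctly) bound it by the triangle inequality rather than by orthogonality.
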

\begin{proof}
    The proof of the first part is very simple and only uses the the results of Lemma \ref{lemma: helper_1} and the fact that $u_i$'s satisfy the RIP condition. 
    
    The rest of the proof is also simple arithmetic.
\end{proof}

\begin{lemma}\label{lemma: helper_2}
    If $w$ is a unit vector, $|c_i|\geq |c_j|$, $\forall j$, and moreover $|c_i|\geq \tau$, and $\epsilon \leq 350 \sqrt{k}\tau^3$, then if $k\tau \leq 0.01$ 
    \begin{align}
        \Big\|\bar{E}_{i-1}(I, w, w, w)\Big\|\leq 0.09 |c_i|^3 \label{eq: bound_error_const}
    \end{align}
\end{lemma}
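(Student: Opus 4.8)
The plan is to reduce everything to the refined deflation-error bound \eqref{eq: error_bound_1} of Lemma~\ref{lemma: helper_0}, applied to the partial sum $\bar E_{i-1} = \sum_{j=1}^{i-1} E_j(I,w,w,w)$, and then to convert each of the sums appearing there into a multiple of $|c_i|^3$, exploiting that $|c_i|$ dominates every $|c_j|$ and that the $u_j$ satisfy RIP. No new structural idea is needed beyond Lemmas~\ref{lemma: helper_1} and~\ref{lemma: helper_0}; the work is in the bookkeeping.

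Concretely, I would first write down, from \eqref{eq: error_bound_1} with the index running only up to $i-1$, the estimate $\|\bar E_{i-1}\| \le 67\epsilon\sqrt{\sum_{j<i}c_j^4} + 40\epsilon^2\sqrt{\sum_{j<i}c_j^2} + 16\epsilon\sum_{j<i}|c_j|^3 + 8\sqrt k\,\epsilon^3 + 16k\,\epsilon^4$. Since $|c_i|\ge|c_j|$ for all $j$, for each $j\le i-1$ we have $c_j^4\le c_i^2 c_j^2$ and $|c_j|^3\le|c_i|\,c_j^2$; summing these and using the RIP bound $\sum_{j=1}^k c_j^2 = \|U^\top w\|^2 \le 1+\delta$ (valid since the projection of $w$ onto $\mathrm{span}(U)$ has norm at most $1$) gives $\sum_{j<i}c_j^4\le(1+\delta)c_i^2$, $\sum_{j<i}|c_j|^3\le(1+\delta)|c_i|$, and $\sum_{j<i}c_j^2\le 1+\delta$. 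Plugging these in and using $\delta\le k\tau\le 0.01$, the sums collapse and one is left with a bound of the form $\|\bar E_{i-1}\|\le C_1\,\epsilon|c_i| + C_2\,\epsilon^2 + C_3\sqrt k\,\epsilon^3 + C_4\,k\,\epsilon^4$ for explicit absolute constants $C_1,\dots,C_4$.

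It then remains to divide through by $|c_i|^3$ and invoke the two smallness hypotheses. The assumption $|c_i|\ge\tau$ bounds $\epsilon|c_i|/|c_i|^3 = \epsilon/|c_i|^2$ by $\epsilon/\tau^2$ and, more crudely, $\epsilon^p/|c_i|^3$ by $\epsilon^p/\tau^3$; then $\epsilon\le 350\sqrt k\,\tau^3$ rewrites each of these as a power of $\sqrt k\,\tau$ (for instance $\epsilon/\tau^2\le 350\sqrt k\,\tau$ and $\epsilon^2/\tau^3\le 350^2 k\,\tau^3$). Finally $k\tau\le 0.01$ — together with the standing assumption $\tau=\bO(1/\sqrt d)$, which makes $\tau$ itself small — pushes every one of these terms below a small numerical constant, and summing them yields $\|\bar E_{i-1}\|\le 0.09\,|c_i|^3$.

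The obstacle here is purely quantitative: the ratio $\|\bar E_{i-1}\|/|c_i|^3$ is dominated by the single term $C_1\epsilon/|c_i|^2$, whose worst case is attained exactly when $|c_i|$ equals the floor $\tau$ — precisely the regime that the random initialization in Theorem~\ref{thm: TPM} only guarantees. Hitting the stated constant $0.09$ therefore requires threading the absolute constants carefully through Lemmas~\ref{lemma: helper_1} and~\ref{lemma: helper_0} and using the smallness of $\tau$, not merely $k\tau\le 0.01$; that bookkeeping, rather than any individual inequality, is the real content of the proof.
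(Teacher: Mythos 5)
Your overall plan — start from the refined error bound \eqref{eq: error_bound_1}, convert the coefficient sums into multiples of powers of $|c_i|$, divide by $|c_i|^3$, and then invoke the smallness hypotheses — matches the paper's strategy. But the specific sum-bounds you pick are the wrong ones and the arithmetic does not close. You use RIP to get $\sum_{j<i}c_j^4\le(1+\delta)c_i^2$, $\sum_{j<i}|c_j|^3\le(1+\delta)|c_i|$, $\sum_{j<i}c_j^2\le 1+\delta$. These inequalities are correct, but in the regime that actually determines the constant — $|c_i|$ near its floor $\tau$ — they are strictly \emph{weaker} than the elementary count bounds $\sum_{j<i}c_j^4\le k\,c_i^4$, $\sum_{j<i}|c_j|^3\le k\,|c_i|^3$, $\sum_{j<i}c_j^2\le k\,c_i^2$, which the paper uses and which rely on nothing but $|c_j|\le|c_i|$ and the fact that there are at most $k$ terms. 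Indeed $k\,c_i^4 \le (1+\delta)c_i^2$ whenever $|c_i|\le\sqrt{(1+\delta)/k}$, and $\tau$ sits well inside that range since $\sqrt{k}\,\tau\le 0.1$. After dividing by $|c_i|^3$ your leading term becomes $67\sqrt{1+\delta}\,\epsilon/c_i^2 \le 67\cdot350\,\sqrt{1+\delta}\,\sqrt{k}\,\tau$, whereas the count bound gives $67\,\epsilon\sqrt{k}/|c_i| \le 67\cdot350\,k\tau^2$ — smaller by a factor $\sqrt{k}\,\tau$, i.e.\ by at least a factor of ten, and by much more when $\tau$ is tiny.

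Consequently your proposed escape route — ``using the smallness of $\tau$, not merely $k\tau\le 0.01$'' — does not rescue your version of the argument. Under $k\tau\le 0.01$ (hence $\tau\le 0.01$) one only gets $\sqrt{k}\,\tau\le 0.1\sqrt\tau\le 0.01$, so your leading term is at best about $67\cdot350\cdot 0.01\approx 235$, far above $0.09$. What is actually needed is smallness of $k\tau^2$, not $\sqrt{k}\,\tau$, and that is precisely what the count bounds deliver; RIP and $\delta$ play no role whatsoever in the paper's proof of this lemma. (As a side remark, even with the count bounds the constant $0.09$ requires $k\tau\lesssim 10^{-3}$ — consistent with the choice $c_1=10^{-3}$ made later in the proof of Theorem~\ref{thm: TPM} — rather than the $k\tau\le 0.01$ written in the lemma's statement; but that is a discrepancy internal to the paper, independent of the structural issue in your proposal.)
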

\begin{proof}
We assume that $\tau\leq k\tau \leq 0.01$. Let us look at each term in the right hand side of \eqref{eq: error_bound_1}. For the first term
\begin{align}
    67\epsilon\sqrt{\sum_{j=1}^{i-1} c_j^4} \leq 67 |c_i|^3 \times {k}\tau^2 \leq 2.35 \times 10^{-2} |c_i|^3
\end{align}
For the second term 
\begin{align}
    40 \epsilon^2\sqrt{\sum_{j=1}^i c_j^2} \leq 4.9 \times 10^6 k\sqrt{k}\tau^4 |c_1|^3 \leq 4.9\times 10^{-2}  |c_1|^3.
\end{align}
For the third term we have
\begin{align}
    16 \epsilon \sum_{j=1}^i |c_j|^3 \leq 5.6\times 10^{-3} |c_1|^3
\end{align}
For the fourth term
\begin{align}
     8 \sqrt{k}\epsilon^3 \leq 9.8\times 10^{-7} |c_1|^3
\end{align}
And for the last term we have
\begin{align}
    16 {k}\epsilon^4 \leq 2.5\times 10^{-7} |c_1|^3
\end{align}

Finally the bound could be obtained by adding all these bounds.

\end{proof}

Now let us prove a recursive bound that we can use for the proving our final result.
\begin{lemma}\label{lemma: recursion}
    Assume that for a norm 1 vector $w$, $|c_i| \geq  2|c_j| $, $\forall j\neq i$ and moreover $|c_i|\geq \tau$. Also assume that the conditions of Lemma \ref{lemma: helper_2} is satisfied and $\epsilon\leq 350 \sqrt{k}\tau^3$. Then for $w_+ = \sum_{j=i}^k c_j^3 u_j + \bar{E}_{i-1}(I, w, w, w)$ we have
    \begin{align}
        \frac{\|P_{u_i^\perp}(w_+)\|}{|w_+^Tu_i|}\leq 0.95 \frac{\|P_{u_i^\perp}(w)\|}{|w^Tu_i|} + 3\|\bar{E}_{i-1}(I, w, w, w)\|+ 15\sqrt{k}\tau^3 \leq 0.95 \frac{\|P_{u_i^\perp}(w)\|}{|w^Tu_i|} + \bO(\sqrt{k}\tau^3) .\label{eq: recurssion_result}
    \end{align}
    Moreover, $|w_+^Tu_i|\geq  2|w_+^Tu_j|$, $\forall j\neq i$.
\end{lemma}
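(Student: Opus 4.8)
The quantity $\dfrac{\|P_{u_i^\perp}(w_+)\|}{|w_+^Tu_i|}$ is invariant under rescaling of $w_+$, so I would work with the unnormalised vector $w_+ = z + E$ directly, where $z := \sum_{j=i}^{k} c_j^3 u_j$ is the ``clean'' power-iteration vector ($w_+$ is in fact exactly one TPM step applied to the deflated tensor $T_i$) and $E := \bar{E}_{i-1}(I,w,w,w)$ is the deflation residual, for which Lemmas~\ref{lemma: helper_0} and~\ref{lemma: helper_2} already supply $\|E\|\le 100\epsilon = \bO(\sqrt k\tau^3)$ and $\|E\|\le 0.09|c_i|^3$. The plan is to re-run the argument of Lemma~\ref{lemma: correlation}, with $z$ in the role of $\lambda w$ and $u_i$ in the role of $u_1$, and then to absorb $E$ through the triangle inequality.

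First I would lower bound the denominator. The gap hypothesis $|c_i|\ge 2|c_j|$ for $j\neq i$ and incoherence give $|z^Tu_i| = \big|c_i^3 + \sum_{j>i}c_j^3\, u_j^Tu_i\big|\ge(1-\bO(k\tau))|c_i|^3\ge 0.99|c_i|^3$, so with $|E^Tu_i|\le\|E\|\le 0.09|c_i|^3$ we get $|w_+^Tu_i|\ge 0.9|c_i|^3$ and $\|E\|\le 0.1\,|z^Tu_i|$. For the numerator I would use $P_{u_i^\perp}(z)=\sum_{j>i}c_j^3\,P_{u_i^\perp}(u_j)$, so $\|P_{u_i^\perp}(z)\|^2\le(1+\delta)\sum_{j>i}c_j^6$, and then, mirroring Lemma~\ref{lemma: correlation}: write $c_j=c_i(u_i^Tu_j)+u_j^TP_{u_i^\perp}(w)$, apply Lemma~\ref{lemma: power6}, use $|u_j^TP_{u_i^\perp}(w)|\le|c_i|(\tfrac12+\tau)$ and the RIP bound $\sum_j(u_j^TP_{u_i^\perp}(w))^2\le(1+\delta)\|P_{u_i^\perp}(w)\|^2$ to obtain $\|P_{u_i^\perp}(z)\|\le\sqrt{1.01}\,(1+\delta)(\tfrac12+\tau)^2|c_i|^2\|P_{u_i^\perp}(w)\|+\bO(\sqrt k\,|c_i|^3\tau^3)$. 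Combining with $\|P_{u_i^\perp}(w_+)\|\le\|P_{u_i^\perp}(z)\|+\|E\|$ gives
\begin{align*}
\frac{\|P_{u_i^\perp}(w_+)\|}{|w_+^Tu_i|}\ \le\ \frac{\sqrt{1.01}\,(1+\delta)(\tfrac12+\tau)^2}{0.9}\cdot\frac{\|P_{u_i^\perp}(w)\|}{|c_i|}\ +\ \bO(\sqrt k\tau^3)\ +\ \frac{\|E\|}{0.9\,|z^Tu_i|},
\end{align*}
and the first coefficient is below $0.3$ once $\delta\le k\tau\le 0.01$.

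The hard part will be the last term: $\dfrac{\|E\|}{0.9|z^Tu_i|}$ is of the order $\dfrac{\|E\|}{|c_i|^3}$, which is \emph{not} a bounded multiple of $\|E\|$ when $|c_i|$ is small, so the $3\|\bar{E}_{i-1}(I,w,w,w)\|$ term in the claim cannot come from one estimate. I would therefore split on the size of $|c_i|$. If $|c_i|\ge 0.73$, then $\dfrac{\|E\|}{0.9|z^Tu_i|}\le\dfrac{\|E\|}{0.9\cdot0.99\cdot(0.73)^3}\le 3\|E\|$. If $|c_i|<0.73$, then Lemma~\ref{lemma: helper_2} gives $\dfrac{\|E\|}{0.9|z^Tu_i|}\le\dfrac{0.09|c_i|^3}{0.9\cdot0.99\,|c_i|^3}\le 0.11$, while $\dfrac{\|P_{u_i^\perp}(w)\|}{|c_i|}=\dfrac{\sqrt{1-c_i^2}}{|c_i|}$ is decreasing in $|c_i|$ and hence at least $\dfrac{\sqrt{1-(0.73)^2}}{0.73}>0.9$, so the $0.11$ is swallowed by the spare budget $(0.95-0.3)\dfrac{\|P_{u_i^\perp}(w)\|}{|c_i|}$. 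In either case the coefficient of $\dfrac{\|P_{u_i^\perp}(w)\|}{|c_i|}$ stays at most $0.95$, the leftover $\bO(\sqrt k\tau^3)$ contributions are collected into $15\sqrt k\tau^3$, and the final inequality in the statement follows because $3\|\bar{E}_{i-1}(I,w,w,w)\|=\bO(\sqrt k\tau^3)$ by Lemma~\ref{lemma: helper_0}.

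For the ``moreover'' claim I would fix $j\neq i$ and note that $z$ has no $u_j$-component when $j<i$, while $|c_l|\le|c_i|/2$ for all $l\neq i$; hence incoherence gives $|z^Tu_j|\le\tau\sum_{l\ge i}|c_l|^3$ (with one extra $|c_j|^3\le|c_i|^3/8$ when $j>i$), which is at most $\big(\tfrac18+\bO(k\tau)\big)|c_i|^3$. Therefore $|w_+^Tu_j|\le|z^Tu_j|+\|E\|\le 0.3|c_i|^3\le\tfrac12\big(0.9|c_i|^3\big)\le\tfrac12|w_+^Tu_i|$. This last step is routine incoherence bookkeeping; the only genuinely delicate point in the whole proof is the dichotomy on $|c_i|$ used to tame $\|E\|/|c_i|^3$.
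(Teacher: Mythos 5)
Your proof is correct and mirrors the paper's argument step by step: the same decomposition $w_+ = z + \bar{E}_{i-1}(I,w,w,w)$, the same lower bound $|w_+^Tu_i|\ge 0.9|c_i|^3$, the same reuse of the Lemma~\ref{lemma: correlation} computation to bound $\|P_{u_i^\perp}(z)\|$, the same dichotomy on the size of $|c_i|$ to tame $\|\bar{E}_{i-1}\|/|c_i|^3$ (the paper splits at $0.8$, you at $0.73$), and the same $j<i$ versus $j>i$ split for the gap condition. The only substantive difference is that your contraction coefficient on the $z$-term ($<0.3$) is sharper than the paper's ($0.8$), a harmless improvement obtained by applying Lemma~\ref{lemma: power6} with its stated constant rather than a looser sixth-power bound.
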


\begin{proof}
    Let us first lower bound $|w_+^Tu_i|$ using Lemma \ref{lemma: helper_2} and the fact that $k\tau \leq 0.01$ we have:
    \begin{align}
        |u_+^Tu_i| \geq |c_i|^3 - \sum_{j=i+1}^k |c_j|^3 \tau - 0.09 |c_i|^3 \geq 0.9 |c_i|^3. \label{eq: lower_bound_a_i}
    \end{align}
    Let us define $z = \sum_{j=i}^k c_j^3 u_j$. Then, $\|P_{u_i^\perp}(w_+)\| \leq \|P_{u_i^\perp}(z)\| + \|\bar{E}_{i-1}(I, w, w, w)\|$.
    \begin{align}
        \|P_{u_i^\perp}(z)\|^2=\bigg\|P_{u_i^\perp}\bigg(\sum_{j= i+1}^k c_j^3 u_i\bigg)\bigg\|^2\leq (1+\delta)\sum_{j=i+1}^k c_j^6 
    \end{align}
    Note that $c_j = c_i u_i^T u_j + P_{u_i^\perp}(w)^T u_j$. And $c_j^6 \leq 200 c_j^6 \tau^6 + 9 (P_{u_i^\perp}(w)^T u_j)^6$. Also note that $|u_j^T P_{u_i^\perp}(w)| \leq |c_j|+|c_i|\tau\leq |c_i|(0.5+\tau)$ for $j>i$. Thus,
    \begin{align}
        \|P_{u_i^\perp}(z)\|^2 &\leq (1+\delta)c_i^4\Bigg( 9 (0.5+\tau)^4 \underbrace{\sum_{i\neq 1}(v_j^T P_{v_i^\perp}(u))^2}_{\leq (1+\delta)\|P_{v_i^\perp}(u)\|^2} + 200 c_i^2 \bO(k\tau^6)\Bigg)\nonumber\\
        &\leq c_i^4 (202 k\tau^6 c_i^2 + 0.63 \|P_{u_i^\perp}(w)\|^2)
    \end{align}
    Therefore,
    \begin{align}
        \frac{\|P_{u_i^\perp}(z)\|}{|u_i^T w_+|}\leq 0.8 \frac{\|P_{u_i^\perp}(w)\|}{|c_i|} + 15\sqrt{k}\tau^3 
        \label{eq: z_bound}
    \end{align}
    To bound the other part, i.e. $\frac{\|\bar{E}_{i-1}(I,w,w,w)\|}{|u_i^T w_+|}$, we use the following lemma.
    \begin{lemma}
        Under the assumptions of Lemma \ref{lemma: recursion}, we have
        \begin{align}
            \frac{\|\bar{E}_{i-1}(I,w,w,w)\|}{|u_i^T w_+|} \leq  0.15\frac{\|P_{u_i^\perp}(w)\|}{|c_i|} + 3 \|\bar{E}_{i-1}(I,w,w,w)\| \leq   0.15\frac{\|P_{u_i^\perp}(w)\|}{|c_i|} + \bO(\sqrt{k}\tau^3).
        \end{align}
    \end{lemma}
    \begin{proof}
    Let us consider two cases:
    \begin{itemize}
        \item If $|c_i|\leq 0.8$, then $\|P_{u_i^\perp}(w)\| = \sqrt{1-c_i^2}\geq 0.6$, thus
        \begin{align}
            \frac{\|\bar{E}_{i-1}(I,w,w,w)\|}{|u_i^T w_+|} &\leq \frac{\|\bar{E}_{i-1}(I,w,w,w)\|}{0.9|c_i|^3}\nonumber\\
            &\leq \frac{\|P_{u_i^\perp}(w)\|}{0.9|c_i|} \frac{\|\bar{E}_{i-1}(I,w,w,w)\|}{|c_i| \|P_{u_i^\perp}(u)\|} \leq \frac{1}{0.9\times 0.6} \frac{\|P_{u_i^\perp}(w)\|}{|c_i|} \frac{\|\bar{E}_{i-1}(I,w,w,w)\|}{c_i^2}\nonumber\\
            &\leq \frac{0.09\times 0.8}{0.9\times0.6}\frac{\|P_{u_i^\perp}(w)\|}{|c_i|}\leq 0.15 \frac{\|P_{u_i^\perp}(w)\|}{|c_i|}~~\text{(using Lemma \ref{lemma: helper_2})}
        \end{align}
        \item $|a_i|\geq 0.8$, then 
        \begin{align}
            \frac{\|\bar{E}_{i-1}(I,w,w,w)\|}{|u_i^T w_+|} &\leq \frac{\|\bar{E}_{i-1}(I,w,w,w)\|}{0.9|c_i|^3} \leq 3 \|\bar{E}_{i-1}(I,w,w,w)\| \leq \bO(\sqrt{k} \tau^2),
        \end{align}
        where the last inequality is due to \eqref{eq: error_bound_1}.
    \end{itemize}
    Combining these two cases, the result is obvious.
    \end{proof}
    Now we can combine the result of this lemma with our bound in \eqref{eq: z_bound}:
    \begin{align}
        \frac{\|P_{u_i^\perp}(w_+)\|}{|w_+^Tu_i|}\leq 0.95 \frac{\|P_{u_i^\perp}(w)\|}{|w^Tu_i|} + 15 \sqrt{k}\tau^3 + 3 \|\bar{E}_{i-1}(I,w,w,w)\| \leq 0.95 \frac{\|P_{u_i^\perp}(w)\|}{|w^Tu_i|} + \bO(k\tau^3).
    \end{align}
    Now we need to make sure the initial conditions of the recursion also hold for normalized $\bar{w}_+ = w_+/\|w_+\|$. First of all as absolute value of the angle between the $w_+$ and $u_i$ is decreasing it is obvious that $|\bar{w}_+^Tu_i|\geq |w^Tu_i|\geq \tau$. So, we only need to prove the gap condition
    \begin{align}
        \frac{|w_+^Tu_i|}{|w_+^Tu_j|}\geq 2, ~~\forall j\neq i.
    \end{align}
    From \eqref{eq: lower_bound_a_i} we know $|w_+^Tu_i|\geq 0.9 |c_i|^3$. Note that $|w_+^Tu_j| \leq |z^T u_j| + \|\bar{E}_{i-1}(I,w,w,w)\|$. Let us consider two cases from here:
    \begin{itemize}
        \item $j>i$: Then, $|z^Tu_j|\leq \sum_{\ell=i}^k |c_\ell|^3 |u_j^Tu_\ell| \leq (\frac{1}{8}+k\tau) |c_i|^3 \leq \frac{1}{5}|c_i|^3$. Now we can use \eqref{eq: bound_error_const}, i.e. $\|\bar{E}_{i-1}(I,w,w,w)\|\leq 0.1|c_i|^3$. Combining these facts it is obvious that $\frac{|w_+^Tu_i|}{|w_+^Tu_j|} \geq \frac{0.9}{\frac{1}{5} + 0.1}\geq 2$.
        \item $j<i$: In this case, $|z^Tu_j|\leq \sum_{\ell=i}^k |c_\ell|^3 |u_j^Tu_\ell| \leq (k\tau) |c_i|^3 \leq 0.01 |c_i|^3$. Now, if we use \eqref{eq: bound_error_const} we get $\frac{|w_+^Tu_i|}{|w_+^Tu_j|} \geq \frac{0.9}{0.1+0.01}\geq 2$.
    \end{itemize}
    
\end{proof}
Thus, we have proved that the if the initial conditions hold, then $\bar{w}_+$ gets closer to $u_i$ , unless it is already very $\epsilon=\bO(\sqrt{k}\tau^3)$ close to $u_i$, see \eqref{eq: recurssion_result}, and also $\bar{w}_+$ satisfies the initial conditions. Before proving our final convergence result, we need the following small lemma which is also a simple lemma to control the final error.
\begin{lemma}\label{lemma: helper_3}
Assume that $\|u_j-\hat{u}_j\|\leq \epsilon$ and $|1-\hat{\lambda}_j|\leq 5\epsilon$, where $ \epsilon\leq 350 \sqrt{k} \tau^3$ for all $j<i$. If for some norm 1 vector $w$, $|c_j|\leq 4\tau$ for all $j\neq i$ and $k\tau \leq 0.01$, then we have 
\begin{align}
    \|\bar{E}_i(I, w, w, w)\| \leq 0.25 \sqrt{k}\tau^3
\end{align}
\end{lemma}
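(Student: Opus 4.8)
The plan is to read the bound off directly from the error estimate already proved in Lemma~\ref{lemma: helper_0}. Its display~\eqref{eq: error_bound_1} states that for any unit vector $w$,
\begin{align*}
\norm{\bar{E}_i(I,w,w,w)}\;\le\;67\,\epsilon\sqrt{\textstyle\sum_j c_j^4}\;+\;40\,\epsilon^2\sqrt{\textstyle\sum_j c_j^2}\;+\;16\,\epsilon\textstyle\sum_j |c_j|^3\;+\;8\sqrt{k}\,\epsilon^3\;+\;16\,k\,\epsilon^4 ,
\end{align*}
where the sums run over the already–processed indices $j$ (those for which we have the $\epsilon$-close deflation approximations $\hat u_j,\hat\lambda_j$ and, by hypothesis, $|c_j|\le 4\tau$). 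So the entire job is to show that each of these five terms is at most $\tfrac{1}{20}\sqrt{k}\,\tau^3$, and then to add them up.

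The two ingredients I would feed in are (i) the per–index hypothesis $|c_j|\le 4\tau$, and (ii) the RIP part of Assumption~\ref{ass: incoherence} applied to the unit vector $w$, which gives $\sum_j c_j^2 \le \|U^T w\|^2 \le 1+\delta \le 1.01$. Together these sharpen the naive estimates: $\sum_j c_j^4 \le (4\tau)^2\sum_j c_j^2 \le 16\tau^2(1+\delta)$ and $\sum_j |c_j|^3 \le 4\tau\sum_j c_j^2 \le 4\tau(1+\delta)$. Hence the first and third terms of the bound are $\bO(\epsilon\tau)$, the second is $\bO(\epsilon^2)$, and the last two are $\bO(\sqrt{k}\,\epsilon^3)$ and $\bO(k\,\epsilon^4)$ respectively.

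Now substitute $\epsilon\le 350\sqrt{k}\,\tau^3$. The dominant contribution is $\bO(\epsilon\tau)=\bO(\sqrt{k}\,\tau^4)$, i.e.\ $\sqrt{k}\,\tau^3$ times an extra factor of order $\tau$; the $\epsilon^2$-, $\sqrt{k}\epsilon^3$-, and $k\epsilon^4$-type pieces carry strictly higher powers of $\tau$ (here one uses $k\tau\le 0.01$ to tame the extra factors of $k$). So each of the five terms equals $\sqrt{k}\,\tau^3$ times a fixed numerical constant times a positive power of $\tau$, hence is $\le \tfrac{1}{20}\sqrt{k}\,\tau^3$ once $\tau$ is below a suitable absolute threshold — which is precisely what $k\tau\le 0.01$ (possibly sharpened to a slightly smaller absolute constant, or equivalently a smaller constant than $350$ in the bound on $\epsilon$) provides. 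Summing the five pieces yields $\norm{\bar{E}_i(I,w,w,w)}\le 0.25\sqrt{k}\,\tau^3$.

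I do not expect a genuine obstacle: the statement is explicitly billed as a "simple lemma", and the argument is pure bookkeeping. The only two points requiring attention are (a) using the sharp per–index bound $|c_j|\le 4\tau$ (together with RIP) everywhere instead of the trivial $|c_j|\le 1$, since the conclusion must scale like $\sqrt{k}\,\tau^3$ rather than like $\sqrt{k}$; and (b) tracking the numerical constants — $350$ from $\epsilon$, and $67,40,16,8,16$ from \eqref{eq: error_bound_1} — to confirm they are absorbed by the smallness of $\tau$. Note in particular that no lower bound on $c_i$ is ever used: only the smallness of $\epsilon$ and of the off-$i$ correlations $c_j$ enters, which is exactly why the lemma is applicable in the situation (a point $w$ already near $u_i$) where it will be invoked to control the residual deflation error in the final TPM convergence proof.
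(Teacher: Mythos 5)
Your proof takes exactly the route the paper indicates — the paper's entire proof of Lemma~\ref{lemma: helper_3} is the single line ``plug the assumptions into~\eqref{eq: error_bound_1}'' — and you carry it out correctly by combining the per-index hypothesis $|c_j|\le 4\tau$ with RIP to bound each of the five terms and substituting $\epsilon\le 350\sqrt{k}\tau^3$. You also rightly flag that the stated numerical constants ($350$, $0.25$, $k\tau\le 0.01$) are tight enough that one may need to shrink an absolute constant to make them add up; this caveat applies equally to the paper's own (unwritten) arithmetic.
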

\begin{proof}
The proof can be easily obtained by plugging in the assumptions in \eqref{eq: error_bound_1}. 

\end{proof}

\section{Proof of Theorem~\ref{thm: TPM}}

\begin{proof}
Throughout the proof we assume $c_1 = 10^{-3}$ and $c_2 = 350$. We also note that $\delta \leq k\tau$. As $\epsilon \leq 350 \sqrt{k}\tau^3$, we know that $\|\bar{E}_i(I, w, w, w)\| \leq 35000 \sqrt{k}\tau^3$ based on Lemma \ref{lemma: helper_0}. Now define the ratio $r_t = \frac{\|P_{u_i}(w_t)\|}{|w_t^Tu_i|}$.
First of all based on our assumptions on the initialization, $r_0 \leq 1/\tau$. Moreover, based on the recursion that we proved in Lemma \ref{lemma: recursion}, we know 
\begin{align}
   r_{t+1}\leq 0.95 r_t + 3\|\bar{E}_{i-1}(I, w, w, w)\|+ 15 \sqrt{k}\tau^2 \leq 0.95 r_t + 1.06 \times 10^5\sqrt{k}\tau^3
\end{align}
Now if we open up this recursion and use $e$ to denote $e = 1.06 \times 10^5\sqrt{k}\tau^3$ we have
\begin{align}
    r_{t} \leq \frac{0.95^t}{\tau} + 20 e.
\end{align}
So, if we run tensor power method with initialization $w_0$ for $t_0 = \lceil\frac{\log(0.5e\tau)}{\log(0.95)}\rceil$, we would have $r_{t_0} \leq 20.5 e$ and thus, $\|w_{t_0}-u_i\|\leq 21e \leq 3\tau$ as $k\tau\leq 0.001$. As a result, for any $j\neq i$, $|w_{t_0}^Tu_j| \leq 4\tau$. It is obvious that these conditions would hold for any $t\geq t_0$. Therefore, using Lemma \ref{lemma: helper_3} we have
\begin{align}
    \|\bar{E}_i(I, w_t, w_t, w_t)\| \leq 0.25 \sqrt{k}\tau^3, ~~\forall~t\geq t_0.
\end{align}
In the light of this new bound for the error, we can re-write the unrolled version of the recursion in \eqref{eq: recurssion_result} for $t\geq t_0$ as
\begin{align}
    r_t \leq 21e \times (0.95)^{t-T} + 20{\hat{e}},~\forall t \geq t_0
\end{align}
where $\hat{e} = 15.75 \sqrt{k}\tau^3$. Now after ${\mathcal{T}} = t_0 + \lceil \frac{\log(\frac{0.5 \hat{e}}{21e})}{\log(0.95)}\rceil$, we would have $r_{{\mathcal{T}}} \leq 20.5 \hat{e} \leq 325 \sqrt{k}\tau^3$. Thus, $\|w_{{\mathcal{T}}}-u_i\|\leq \epsilon = 350\sqrt{k}\tau^3$. Now by plugging in the $w_{{\mathcal{T}}}$ in tensor $T_i = T + \bar{E}_i$ we get
\begin{align}
    \hat{\lambda}_i = T_i(w_{{\mathcal{T}}},w_{{\mathcal{T}}},w_{{\mathcal{T}}},w_{{\mathcal{T}}}).
\end{align}
It would be easy to check that $|1-\hat{\lambda}_i|\leq 5\epsilon$. Finally, note that $\hat{e}/e$ is constant and thus the number of required iterations ${\mathcal{T}}$ is of the order $\bO(1) + \bO(\log(\frac{1}{\sqrt{k}\tau^4}))$.

\end{proof} 

\section{Proof of Theorem~\ref{thm: TPM_init}}

\begin{proof}
First define $Z_{\ell,i} = v_\ell^T u_i$. It is obvious that $Z_{\ell,i}$'s are Gaussian. Let us define the following probability events. 
\begin{align}
    \Xi_1 = \Bigg\{ Z:&~\max_\ell~|Z_{\ell,1}|\geq A_1(L, \eta) ~~~\&~~~ \nonumber\\
    &~\max_\ell~|Z_{\ell,1}|\leq \sqrt{2\log~2L}  + \sqrt{2\log \frac{12}{\eta}}\Bigg\}\\
    \Xi_{2,\ell} = \Bigg\{Z_\ell:&~\max_{i\neq 1}~|Z_{\ell,i}| \leq B_1(L, \eta, \tau)
    \Bigg\}\\
    \Xi_{3,\ell} = \Bigg\{v_\ell:&~\|P_{\{u_1,\cdots,u_k\}^\perp}(v_\ell)\|^2 \leq C_1(\eta, d)^2
    \Bigg\}
\end{align}
It would be easy to see that if $\ell^* = \arg\max_\ell~|Z_{\ell,1}|$ and $\Xi_1\cap \Xi_{2,\ell^*} \cap \Xi_{3,\ell^*}$ happens, then the final result would be true when $L=\bO(k^2)$. So, we just need to prove that $P(\Xi_1\cap \Xi_{2,\ell^*} \cap \Xi_{3,\ell^*})\geq 1-\eta$. To do so, note that $P(\Xi_1\cap \Xi_{2,\ell^*} \cap \Xi_{3,\ell^*}) =P(\Xi_1)P(\Xi_{2,\ell^*} | \Xi_1) P(\Xi_{3,\ell^*} |\Xi_{2,\ell^*}\cap \Xi_1 ) = P(\Xi_1)P(\Xi_{2,\ell^*} | \Xi_1) P(\Xi_{3,\ell^*})$, where the last inequality is due to the independence of $P_{\{u_1,\cdots,u_k\}^\perp}(v_\ell)$ with respect to $Z$'s.
Note that for any $\ell$, it can be easily proved that $P(\Xi_{3,\ell^*}) \geq 1-\eta/3$.

As the initializations are independent, it is well known that
\begin{align}
    P\bigg(|Z_{\ell,1}|\leq 0.5 \sqrt{\log~L}-t\bigg)\leq 2\exp{-t^2/2}\nonumber\\
    P\bigg(|Z_{\ell,1}|\geq \sqrt{2\log(2L)}+ t\bigg)\leq 2\exp{-t^2/2}\nonumber
\end{align}
Using these concentrations, it is clear that with choice of $t=\sqrt{2\log(12/\eta)}$, we get $P(\Xi_1) \geq 1-\eta/3$. 

The only remaining part to prove is to show $P(\Xi_{2,\ell^*} | \Xi_1) \geq 1-\eta/3$. To do that, note that given $Z_{e\ll^*,1}$, $Z_{\ell^*,-1} = [Z_{\ell^*,2},\cdots, Z_{\ell^*,k}]^T$ could be written as Gaussian random variable
\begin{align}
    Z_{\ell^*,-1}| Z_{\ell^*,1}~\sim~\mathcal{N}(\rho  Z_{\ell^*,1}, C-\rho\rho^T),
\end{align}
where $C = \mathbb{E}(Z_{\ell^*,-1}Z_{\ell^*,-1}^T)$ and $\rho$ is the correlation vector, i.e. $\rho = \mathbb{E}(Z_{\ell^*,-1}Z_{\ell^*,1})$. With some abuse of notation, we can write that $Z_{\ell^*,i}| Z_{\ell^*,1} = \rho_i Z_{\ell^*,1} + e_i$, where $e_i$ is Gaussian, zero mean with variance $C_{ii} - \rho_i^2\leq \sigma^2 = 1+\tau^2\leq 1.1$.

Thus, we can write 
\begin{align}
    P\Bigg(&\|Z_{\ell^*,-1}\|_\infty \geq \sqrt{2\sigma^2\log(2k)} + \tau \Big(\sqrt{2\log(2L)} + \sqrt{2\log\frac{12}{\eta}}\Big) + \sqrt{2\sigma^2\log\frac{3}{\eta}} ~\Bigg|~Z_{\ell^*,1}\Bigg)\leq\nonumber\\
    & P\Bigg(\|e\|_\infty \geq \sqrt{2\sigma^2\log(2k)} + \sqrt{2\sigma^2\log\frac{3}{\eta}} \Bigg) \leq \sum_i P\Bigg(|e_i|\geq \sqrt{2\sigma^2\log(2k)} + \sqrt{2\sigma^2\log\frac{3}{\eta}}\Bigg)\nonumber\\
    &\leq \eta/3.
\end{align}
This completes the proof.
\end{proof}

\section{Extending the Geometrical Results to the Non-equally Weighted Scenario }\label{Extension}

In this appendix, we extend the geometrical results demonstrated in Section~\ref{geometric_analysis} to the case where the weights of components are not equal to one necessarily. Suppose that a given tensor $T$ can be decomposed as $T = \sum_{i=1}^k \lambda_i u_i^{\otimes 4}$ where $u_i$'s are unit and $\lambda_i$'s are non-negative. Optimization problem~\eqref{eq: opt1} can be generalized to:

\begin{align}
    \min_w~&~f(w) = -\frac{1}{4}\sum_{i=1}^k \lambda_i (u_i^T w)^4 = -\frac{1}{4}T(w,w,w,w),\nonumber\\
    \text{s.t.}~&~\|w\|^2 = 1. \label{eq: ext_opt2}
\end{align}

The first, and second order conditions for this problem are as follows:
\begin{itemize}
    \item First-order optimality condition:
    \begin{align}
        \nabla_\mathcal{M} f(w) = - P_w\sum_{i=1}^k \lambda_i (w^T u_i)^3 u_i = 0
    \end{align}
    \item Second-order optimality condition:
    \begin{align}
        \nabla^2_\mathcal{M} f(w) = \underbrace{\bigg(\sum_{i=1}^k \lambda_i (u_i^T w)^4 \bigg)}_{-4~f(w)}P_w - 3\sum_{i=1}^k \lambda_i (u_i^Tw)^2 P_w u_i u_i^T P_w \succeq \mathbf{0}
    \end{align}
\end{itemize}

As a consequence of the first-order optimality condition, for any stationary point of problem\eqref{eq: ext_opt2} we have:    
\begin{align}
    \underbrace{\bigg(\sum_i \lambda_i (w^T u_i)^4\bigg)}_{\lambda} w = \sum_i \lambda_i (w^T u_i)^3 u_i. \label{eq: first_order_result2}
\end{align}
    
\begin{lemma}
    Let $w$ be a local minimizer of \eqref{eq: ext_opt2}, then $|c_1|> \sqrt{2} |c_2|$.
\end{lemma}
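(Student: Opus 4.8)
## Proof Proposal

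The plan is to mimic the proof of Lemma~\ref{lemma: cgaps} almost verbatim, carefully tracking where the weights $\lambda_i$ enter and invoking the new hypothesis $\kappa = \lambda_{max}/\lambda_{min} \leq 5/4$ at exactly the point where the unweighted argument used equal weights. As before, I would argue by contradiction: assume $|c_1| \leq \sqrt{2}\,|c_2|$, pick a unit vector $v$ in $\mathrm{span}\{u_1,u_2\}$ orthogonal to $w$, and plug it into the weighted second-order optimality condition.

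First I would write out
\begin{align*}
    v^T\nabla^2_\mathcal{M} f(w)\, v = \Big(\sum_i \lambda_i (w^Tu_i)^4\Big)\|v\|^2 - 3\sum_i \lambda_i (u_i^Tw)^2 (u_i^Tv)^2.
\end{align*}
For the first (positive) term I would bound $\sum_i \lambda_i (w^Tu_i)^4 \leq \lambda_{max} c_1^2 \sum_i (w^Tu_i)^2 \leq \lambda_{max} c_1^2 (1+\delta)$ using the RIP bound, exactly as in Lemma~\ref{lemma: cgaps}. For the second (negative) term I would drop all indices $i\neq 1,2$, lower-bound the remaining weights by $\lambda_{min}$, pull out $\min_{i=1,2}(u_i^Tw)^2 \geq \tfrac12 c_1^2$ (the contrary assumption), and use $\|[u_1,u_2]^Tv\|_2^2 > 1-\delta$ from RIP, giving $-3\lambda_{min}\cdot\tfrac12 c_1^2 (1-\delta)$. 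Combining,
\begin{align*}
    v^T\nabla^2_\mathcal{M} f(w)\, v \leq c_1^2\Big(\lambda_{max}(1+\delta) - \tfrac{3}{2}\lambda_{min}(1-\delta)\Big) = \lambda_{min}\, c_1^2\Big(\kappa(1+\delta) - \tfrac{3}{2}(1-\delta)\Big),
\end{align*}
and since $\kappa \leq 5/4$ and $\delta \leq k\tau \leq 0.01$, the bracket is $\leq \tfrac54(1.01) - \tfrac32(0.99) < 0$, so $v^T\nabla^2_\mathcal{M} f(w)\, v < 0$. This contradicts the second-order optimality condition $v^T\nabla^2_\mathcal{M} f(w)\, v \geq 0$.

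The only genuine obstacle is bookkeeping: one must factor $\lambda_{min}$ out cleanly so that the weight ratio $\kappa$ appears as a single scalar, and then verify numerically that $\kappa(1+\delta) < \tfrac32(1-\delta)$ still holds with the slack left by $\kappa \leq 5/4$ and $\delta \leq 0.01$ — i.e.\ that the margin $\tfrac32 - \tfrac54 = \tfrac14$ is not eaten up by the $\delta$ terms (it is not: the $\delta$ correction is of order $\tfrac{11}{4}\delta \approx 0.028$). Everything else is a transcription of the equal-weight argument, so I expect this lemma to be short and the subsequent extensions of Lemmas~\ref{lemma: correlation} and the geometric theorem to follow the same pattern, with $\kappa$ propagating multiplicatively into the final $\bO(\kappa\sqrt{k}\tau^3)$ bound.
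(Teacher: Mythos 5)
Your proof is correct and follows essentially the same contradiction argument as the paper's: bound the positive term by $\lambda_{max}c_1^2(1+\delta)$ via RIP, bound the negative term via $\min_{i=1,2}\lambda_i(u_i^Tw)^2 \geq \tfrac12\lambda_{min}c_1^2$ and $\|[u_1,u_2]^Tv\|^2 > 1-\delta$, and conclude negativity of $v^T\nabla^2_\mathcal{M}f(w)v$. The only cosmetic difference is that you factor out $\lambda_{min}$ to write the bracket as $\kappa(1+\delta)-\tfrac32(1-\delta)$ and plug in $\kappa\leq 5/4$, whereas the paper leaves it as $(\lambda_{max}-\tfrac32\lambda_{min}) + \delta(\lambda_{max}+\tfrac32\lambda_{min})$ and notes this is negative whenever $\kappa<3/2$ and $\delta$ is small enough.
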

\begin{proof}
    To arrive in a contradiction assume that $|c_1| \leq \sqrt{2} |c_2|$/
    Take a unit vector $v$ in the span of $u_1$ and $u_2$ which is orthogonal to $w$. 
    \begin{align}
        v^T\nabla^2_\mathcal{M} f(w) v &= \big(\sum_i \lambda_i (w^Tu_i)^4\big) \underbrace{\|v\|^2}_{=1}- 3 \sum_{i} \lambda_i (u_i^Tw)^2 (u_i^Tv)^2\nonumber\\
        &\leq c_1^2 \lambda_{max} \underbrace{\big(\sum_i (w^Tu_i)^2\big)}_{\leq (1+\delta)\|w\|^2 = (1+\delta)} - 3 \min_{i=1,2}\lambda_i ((u_i^Tw)^2) \|[u_1, u_2]^Tv\|_2^2\nonumber\\
    \end{align}
    Based on our assumptions $\|[u_1, u_2]^Tv\|_2^2>(1-\delta)$ as $\|v\| = 1$. Also note that $\min_{i=1,2}((u_i^Tw)^2)\geq \frac{1}{2} c_1^2$ based on our assumption. Thus, when $\delta\leq k\tau\leq 0.01$ we have
    \begin{align}
        v^T\nabla^2_\mathcal{M} f(w) v \leq c_1^2 \lambda_{max} (1+\delta) - \frac{3}{2} c_1^2 \lambda_{min} (1-\delta) \leq \\ c_1^2((\lambda_{max} - \frac{3}{2} \lambda_{min}) + \delta (\lambda_{max} + \frac{3}{2} \lambda_{min}))<0,
    \end{align}
    which is a contradiction with the second-order optimality condition for $w$ that states $v^T\nabla^2_\mathcal{M} f(w) v\geq 0$. Note that the last inequality holds when $\kappa = \frac{\lambda_{max}}{\lambda_{min}} < \frac{3}{2} $ and $\delta < \frac{ \frac{3}{2} \lambda_{min}- \lambda_{max} }{\lambda_{max} + \frac{3}{2} \lambda_{min}}$.
\end{proof}

\begin{lemma}
    If $w$ is a local minimizer of \eqref{eq: ext_opt2}, $k\tau\leq 0.05$, and $\kappa = \frac{\lambda_{max}}{\lambda_{min}} \leq \frac{5}{4}$, then
    \begin{align}
        \frac{\|\Pperp(w)\|}{|u_1^T w|}\leq \bO(\kappa \sqrt{k}\tau^3)
    \end{align}
\end{lemma}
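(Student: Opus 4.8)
The plan is to replay the proof of Lemma~\ref{lemma: correlation} essentially line by line, but keeping the weights $\lambda_i$ attached to every term and using $\kappa\le 5/4$ (together with the $c_i$-gap lemma for the weighted problem proved just above) to absorb the extra loss factors they introduce. Start from the weighted first-order condition \eqref{eq: first_order_result2}: with $z = \sum_i\lambda_i(w^Tu_i)^3u_i$ we again have $z=\lambda w$, so $\frac{\|\Pperp(w)\|}{|u_1^Tw|} = \frac{\|\Pperp(z)\|}{|u_1^Tz|}$ and it suffices to bound the right-hand ratio.

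For the denominator, $u_1^Tz = \lambda_1 c_1^3 + \sum_{i\ne1}\lambda_i c_i^3\,u_i^Tu_1$. The gap lemma gives $|c_i|\le|c_1|/\sqrt2$ for $i\ne1$, incoherence gives $|u_i^Tu_1|<\tau$, and since $u_1$ need not be the heaviest component we only have $\lambda_{max}=\kappa\lambda_{min}\le\kappa\lambda_1$; hence the tail is at most $\kappa\lambda_1\frac{k\tau}{2\sqrt2}|c_1|^3$, a small fraction of $\lambda_1|c_1|^3$ when $k\tau\le0.05$ and $\kappa\le5/4$, so $|u_1^Tz|\ge c\,\lambda_1|c_1|^3$ with an explicit constant $c$ close to $1$. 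For the numerator, $\|\Pperp(z)\|^2 = \big\|\Pperp\big(\sum_{i\ne1}\lambda_i c_i^3 u_i\big)\big\|^2 \le (1+\delta)\sum_{i\ne1}\lambda_i^2 c_i^6 \le (1+\delta)\lambda_{max}^2\sum_{i\ne1}c_i^6$, and from here the steps of Lemma~\ref{lemma: correlation} carry over unchanged: split $c_i = c_1 u_1^Tu_i + \Pperp(w)^Tu_i$, apply Lemma~\ref{lemma: power6}, and bound $|u_i^T\Pperp(w)|\le|c_1|(\tfrac1{\sqrt2}+\tau)$, which yields $\|\Pperp(z)\|\le\sqrt2\,\lambda_{max}\,c_1^2\,\max\!\big((\tfrac1{\sqrt2}+\tau)^2(1+\delta)\|\Pperp(w)\|,\ |c_1|\,\bO(\sqrt k\tau^3)\big)$.

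Dividing the numerator bound by the denominator bound produces the self-referential inequality
\[
\frac{\|\Pperp(w)\|}{|c_1|}\ \le\ \frac{\sqrt2\,\kappa}{c}\,\max\!\Big((\tfrac1{\sqrt2}+\tau)^2(1+\delta)\,\frac{\|\Pperp(w)\|}{|c_1|},\ \bO(\sqrt k\tau^3)\Big),
\]
so the whole argument reduces to checking that the contraction factor $\frac{\sqrt2\,\kappa}{c}(\tfrac1{\sqrt2}+\tau)^2(1+\delta)$ is strictly below $1$ under $\kappa\le5/4$ and $\delta\le k\tau\le0.05$. Granting that, the inequality forces $\frac{\|\Pperp(w)\|}{|c_1|}\le\frac{\sqrt2\,\kappa}{c}\,\bO(\sqrt k\tau^3)=\bO(\kappa\sqrt k\tau^3)$ --- the factor $\kappa=\bO(1)$ being kept explicit in the statement only to exhibit the dependence --- which is exactly the claim. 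The subsequent closeness estimate $\|w-u_i\|\le\bO(\kappa\sqrt k\tau^3)$ then follows just as in Theorem~\ref{theorem: geometric_analysis}.

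The main obstacle is precisely this constant check, and the margin is thin: in the idealized $\tau,\delta\to0$ regime the contraction factor collapses to $\kappa/\sqrt2$, so everything hinges on $\kappa<\sqrt2$, but the finite-$\tau$ corrections $(\tfrac1{\sqrt2}+\tau)^2>\tfrac12$, $(1+\delta)>1$ and $1/c>1$ each eat into the $\sqrt2-5/4$ gap, which is why the cutoff must be stated as $\kappa\le5/4$ (and why the abstract writes $\kappa<5/4$); this is the same phenomenon that produced the $3/2$ threshold in the weighted gap lemma. Finally, one should note that that gap lemma does apply here, since $5/4<3/2$ and $\delta\le0.05$ comfortably meets its RIP requirement $\delta<\tfrac{(3/2)\lambda_{min}-\lambda_{max}}{\lambda_{max}+(3/2)\lambda_{min}}=\tfrac1{11}$ at $\kappa=5/4$.
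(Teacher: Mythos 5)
Your argument is essentially the paper's own proof: the weighted first-order condition gives $z=\lambda w$, the gap lemma controls the tail of $|u_1^Tz|$, the RIP plus the $(a+b)^6$ decomposition bound $\|\Pperp(z)\|$, and the weight factors combine into a $\kappa$ that must keep the contraction factor below $1$; your verification that $\kappa\le5/4<\sqrt2$ with room for the $(\tfrac1{\sqrt2}+\tau)^2(1+\delta)/c$ corrections matches the paper's computation of $t<1$. One small improvement on your side: by writing $|u_1^Tz|\ge\lambda_1|c_1|^3-\lambda_{max}\tfrac{k\tau}{2\sqrt2}|c_1|^3$ and then dividing by $\lambda_1$ (so $\lambda_{max}/\lambda_1\le\kappa$), you handle the denominator more carefully than the paper, which factors out $\lambda_{min}$ but implicitly treats the cross terms as if they also carry $\lambda_{min}$ rather than $\lambda_{max}$ --- a harmless but real sloppiness that your version avoids.
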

\begin{proof}
    First of all note that based on the above Lemma we know that $|c_1|>\sqrt{2}|c_2|$. Moreover, due to the first order optimality condition \eqref{eq: first_order_result} we know that $w \propto z = \sum_i \lambda_i (w^Tu_i)^3 u_i$. Thus,
    \begin{align}
        \frac{\|\Pperp(w)\|}{|w^Tu_1|} = \frac{\|\Pperp(z)\|}{|z^Tu_1|}
    \end{align}
    Note that 
    \begin{align}
        |z^Tu_1| = \bigg|\lambda_1 c_1^3 + \sum_i \lambda_i c_i^3 u_i^T u_1\bigg| \geq \lambda_{min}(|c_1|^3 - \frac{k\tau}{2\sqrt{2}}|c_1|^3) \geq 0.98 \lambda_{min} |c_1|^3.
    \end{align}
    Moreover,
    \begin{align}
        \|\Pperp(z)\|^2=\bigg\|\Pperp\bigg(\sum_{i\neq 1} \lambda_i (w^Tu_i)^3 u_i\bigg)\bigg\|^2\leq (1+\delta)\sum_{i\neq 1} \lambda_i^2 c_i^6 \leq \lambda_{max}^2 (1+\delta)\sum_{i\neq 1} c_i^6
    \end{align}
    Note that $c_i = c_1 u_1^T u_i + \Pperp(w)^T u_i$. Now we can use lemma~\ref{lemma: power6} to find an upper-bound for $c_i^6$.
    
    \begin{align}
        \|\Pperp(z)\|^2 \leq \lambda_{max}^2 (1.01 \sum_{i\neq 1} (u_i^T \Pperp(w))^6 + \underbrace{\bO(\sum_{i\neq 1}c_i^6\tau^6)}_{\leq c_1^6 \bO(k \tau^6)})
    \end{align}
    Note that $|u_i^T \Pperp(w)| \leq |c_i|+|c_1|\tau\leq |c_1|(\frac{1}{\sqrt{2}}+\tau)$ for $i\neq 1$. Thus, 
    \begin{align}
        \|\Pperp(z)\|^2 &\leq \lambda_{max}^2 c_1^4 (\frac{1}{\sqrt{2}}+\tau)^4 \underbrace{\sum_{i\neq 1}(u_i^T \Pperp(w))^2}_{\leq (1+\delta)\|\Pperp(w)\|^2} + \lambda_{max}^2 c_1^6 \bO(k\tau^6)\nonumber\\
        &\leq \lambda_{max}^2 2c_1^4 \max\bigg((\frac{1}{\sqrt{2}}+\tau)^4(1+\delta)\|\Pperp(w)\|^2, c_1^2 \bO(k\tau^6)\bigg)
    \end{align}
    Therefore,
    \begin{align}
        \frac{\|\Pperp(w)\|}{\underbrace{|w^Tu_1|}_{|c_1|}} &= \frac{\|\Pperp(z)\|}{|z^Tu_1|} \nonumber\\
        &\leq \frac{1.01}{0.98} \frac{\lambda_{max}}{\lambda_{min}}\max\bigg(\sqrt{2}\big(\frac{1}{\sqrt{2}}+\tau\big)^2\sqrt{1+\delta}{\frac{\|\Pperp(w)\|}{c_1}}
        , \bO(\sqrt{k}\tau^3)\bigg).
    \end{align}
    Now note that if $\kappa \leq \frac{5}{4}$, and $\delta$ is small enough, then:
    \begin{align}
        t = \frac{1.01}{0.98} \kappa \sqrt{2}\big(\frac{1}{\sqrt{2}}+\tau\big)^2\sqrt{1+\delta}< 1\nonumber
    \end{align}
    Thus,
    \begin{align}
        \frac{\|\Pperp(w)\|}{|w^Tu_1|} \leq \max \bigg(t\frac{\|\Pperp(w)\|}{|w^Tu_1|}, \bO(\kappa \sqrt{k}\tau^3)\bigg),
    \end{align}
    which completes the proof.
     
\end{proof}

\begin{theorem}
     If $k\tau\leq 0.05$ and $\kappa \leq \frac{5}{4}$ then for any local minimizer $w$ of \eqref{eq: opt1} there is an index $i$ such that $\|w-u_i\|\leq \bO(\kappa \sqrt{k}\tau^3)$.
\end{theorem}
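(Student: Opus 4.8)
The plan is to mirror the three-step structure already used for the equally-weighted case in Section~\ref{geometric_analysis}, carrying the weight ratio $\kappa$ through each estimate. Concretely: first establish the analogue of Lemma~\ref{lemma: cgaps} (a gap $|c_1|>\sqrt 2|c_2|$ at any local minimizer of~\eqref{eq: extended_opt1}); then establish the analogue of Lemma~\ref{lemma: correlation} giving $\|\Pperp(w)\|/|u_1^Tw|\le\bO(\kappa\sqrt k\,\tau^3)$; and finally convert this correlation bound into the Euclidean bound $\|w-u_i\|\le\bO(\kappa\sqrt k\,\tau^3)$ exactly as in Theorem~\ref{theorem: geometric_analysis}. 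Since the last conversion only uses $\|\Pperp(w)\|^2=1-c_1^2$ and has nothing to do with the weights, that step is verbatim the proof of Theorem~\ref{theorem: geometric_analysis} applied to the new correlation bound; so the real work is in the first two lemmas.

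For the gap lemma I would argue by contradiction as before: assume $|c_1|\le\sqrt2|c_2|$, pick a unit $v$ in $\mathrm{span}\{u_1,u_2\}$ orthogonal to $w$, and evaluate the manifold Hessian of~\eqref{eq: extended_opt1}. The first (curvature) term is $\sum_i\lambda_i(w^Tu_i)^4\le\lambda_{max}c_1^2\sum_i(w^Tu_i)^2\le\lambda_{max}c_1^2(1+\delta)$ by RIP, while the subtracted term is at least $3\min_{i\in\{1,2\}}\lambda_i(u_i^Tw)^2\,\|[u_1,u_2]^Tv\|^2\ge 3\lambda_{min}\cdot\tfrac12 c_1^2\cdot(1-\delta)$. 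So $v^T\nabla^2_\mathcal{M}f(w)v\le c_1^2\big[(\lambda_{max}-\tfrac32\lambda_{min})+\delta(\lambda_{max}+\tfrac32\lambda_{min})\big]$, which is negative once $\kappa<\tfrac32$ and $\delta$ is small enough (which both hold under $\kappa\le\tfrac54$, $k\tau\le0.05$) — contradicting second-order optimality. For the correlation lemma I would again use the first-order condition~\eqref{eq: first_order_result2} so that $w\propto z=\sum_i\lambda_i(w^Tu_i)^3u_i$ and $\|\Pperp(w)\|/|w^Tu_1|=\|\Pperp(z)\|/|z^Tu_1|$; lower-bound $|z^Tu_1|\ge\lambda_{min}(1-\tfrac{k\tau}{2\sqrt2})|c_1|^3\ge0.98\,\lambda_{min}|c_1|^3$ using the gap, and upper-bound $\|\Pperp(z)\|^2\le(1+\delta)\sum_{i\ne1}\lambda_i^2c_i^6\le\lambda_{max}^2(1+\delta)\sum_{i\ne1}c_i^6$. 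Then apply Lemma~\ref{lemma: power6} and $|u_i^T\Pperp(w)|\le|c_1|(\tfrac1{\sqrt2}+\tau)$ for $i\ne1$ exactly as in Lemma~\ref{lemma: correlation}, pulling out one factor $\sum_{i\ne1}(u_i^T\Pperp(w))^2\le(1+\delta)\|\Pperp(w)\|^2$. This yields a self-referential inequality $\|\Pperp(w)\|/|c_1|\le\max\big(t\,\|\Pperp(w)\|/|c_1|,\ \bO(\kappa\sqrt k\tau^3)\big)$ with $t=\tfrac{1.01}{0.98}\kappa\sqrt2(\tfrac1{\sqrt2}+\tau)^2\sqrt{1+\delta}$.

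The one genuinely new point — and the main obstacle — is verifying that the contraction constant $t$ is strictly below $1$: the leading term of $\sqrt2(\tfrac1{\sqrt2}+\tau)^2$ is exactly $1$ as $\tau\to0$, so the argument has \emph{no slack to spare} and only survives because $\kappa\le\tfrac54$ leaves a $25\%$ margin that must absorb both the $\tfrac{1.01}{0.98}$ factor and the $\sqrt{1+\delta}$ and $\tau$ perturbations. One checks $\tfrac54\cdot\tfrac{1.01}{0.98}\approx1.288$ and $\sqrt2(\tfrac1{\sqrt2}+\tau)^2\sqrt{1+\delta}=(1+\sqrt2\tau)^2\sqrt{1+\delta}$, so $t\le1.288\,(1+\sqrt2\tau)^2\sqrt{1+\delta}<1$ requires $(1+\sqrt2\tau)^2\sqrt{1+\delta}<0.776$, which is false — so in fact one must be more careful: the correct bound is $t\le\tfrac{1.01}{0.98}\kappa\,(1+\sqrt2\tau)^2\sqrt{1+\delta}$ and feasibility needs $\kappa<0.98/1.01\cdot(1+\sqrt2\tau)^{-2}(1+\delta)^{-1/2}$, which is \emph{slightly below} $1$, not $\tfrac54$. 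I would therefore re-examine whether the Hessian (second-order) information, rather than just the first-order identity, must be used to squeeze the constant — i.e. whether a sharper bound on $|c_2|$ (beyond $|c_1|>\sqrt2|c_2|$) is available, since replacing $\tfrac1{\sqrt2}$ by a smaller number is exactly what buys back the room for $\kappa$ up to $\tfrac54$. Once that sharper gap estimate is in hand, the rest of the proof is the routine arithmetic sketched above, and the final theorem follows by combining the two lemmas with the Euclidean conversion of Theorem~\ref{theorem: geometric_analysis}.
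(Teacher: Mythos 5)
Your overall plan is correct and is in fact the paper's own: re-derive the gap lemma ($|c_1|>\sqrt 2|c_2|$) via the second-order condition, re-derive the correlation lemma from the first-order identity, and convert as in Theorem~\ref{theorem: geometric_analysis}. Your gap-lemma argument and the derivation of the self-referential inequality $\|\Pperp(w)\|/|c_1|\le\max\bigl(t\,\|\Pperp(w)\|/|c_1|,\ \bO(\kappa\sqrt k\tau^3)\bigr)$ with $t=\frac{1.01}{0.98}\kappa\sqrt 2\bigl(\tfrac1{\sqrt 2}+\tau\bigr)^2\sqrt{1+\delta}$ are exactly what the paper does.

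However, your final paragraph rests on an arithmetic slip that leads you to a wrong conclusion. You assert
\[
\sqrt 2\Bigl(\tfrac1{\sqrt 2}+\tau\Bigr)^2 = (1+\sqrt 2\,\tau)^2,
\]
but this is false: expanding the left side gives $\sqrt 2\bigl(\tfrac12+\sqrt 2\,\tau+\tau^2\bigr)=\tfrac1{\sqrt 2}+2\tau+\sqrt 2\,\tau^2$, whose value at $\tau=0$ is $\tfrac1{\sqrt 2}\approx 0.707$, not $1$. (You may have distributed the outer $\sqrt 2$ inside the square, which would need $\sqrt[4]2$, not $\sqrt 2$.) With the correct value, at $\kappa=\tfrac54$, $\tau\to 0$, $\delta\to 0$ you get
\[
t \approx \frac{1.01}{0.98}\cdot\frac54\cdot\frac1{\sqrt 2}\approx 0.91 < 1,
\]
which leaves roughly a $10\%$ margin to absorb the $\tau$- and $\delta$-perturbations (the paper adds the phrase ``and $\delta$ is small enough'' precisely to cover this). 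So there is slack, and the contraction does close; no sharper gap estimate beyond $|c_1|>\sqrt 2|c_2|$ is required, and the paper uses none. The rest of your ``routine arithmetic'' is precisely the paper's proof — once you fix this computation, your proposal is complete and matches the paper's argument verbatim.

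Separately and more minor: the theorem is stated for the weighted problem~\eqref{eq: extended_opt1}, not~\eqref{eq: opt1} as the statement text says; your proof correctly targets~\eqref{eq: extended_opt1}.
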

\begin{proof}
    Based on the above lemma and without loss of generality assume that $\frac{\|\Pperp(w)\|}{|w^Tu_1|} \leq \bO(\kappa\sqrt{k}\tau^3)$. Thus, $\|\Pperp(w)\|^2 = 1-c_1^2 \leq c_1^2 \bO(\kappa^2 k\tau^6)$. Therefore,
    \begin{align}
        c_1 \geq \frac{1}{\sqrt{1+\bO(\kappa^2 k\tau^6)}} \geq 1-\bO(\kappa^2 k\tau^6)\nonumber.
    \end{align}
    Now we have $\|w-u_1\|^2 = 2(1-c_1)\leq \bO(\kappa^2 k\tau^6)$.
\end{proof}


\end{document}